\theoremstyle{plain}
\newtheorem{theorem}{Theorem}[section]
\newtheorem{lemma}[theorem]{Lemma}
\newtheorem{corollary}[theorem]{Corollary}
\theoremstyle{definition}
\newtheorem{assumption}{Assumption}
\theoremstyle{remark}
\newtheorem{remark}[theorem]{Remark}
\renewcommand{\norm}[1]{\left\lVert#1\right\rVert_2}
\newcommand\inner[2]{\left\langle #1, #2 \right\rangle}
\newcommand{\ifcomments}{\iftrue}
\newcommand\bb[0]{\mathbb{B}}
\newcommand\ee[0]{\mathbb{E}}
\newcommand\rr[0]{\mathbb{R}}
\renewcommand\ss[0]{\mathbb{S}}
\newcommand\aaa[0]{\mathcal{A}}
\newcommand\ddd[0]{\mathcal{D}}
\newcommand\fff[0]{\mathcal{F}}
\newcommand\ooo[0]{\mathcal{O}}
\newcommand\ppp[0]{\mathcal{P}}
\newcommand\rrr[0]{\mathcal{R}}
\newcommand\xxx[0]{\mathcal{X}}
\newcommand\rb[1]{\left(#1\right)}
\renewcommand\sb[1]{\left[#1\right]}
\newcommand\cb[1]{\left\{#1\right\}}
\icmltitlerunning{Federated Online and Bandit Convex Optimization}
\begin{document}

\twocolumn[
\icmltitle{Federated Online and Bandit Convex Optimization}



\icmlsetsymbol{equal}{*}

\begin{icmlauthorlist}
\icmlauthor{Kumar Kshitij Patel}{yyy}
\icmlauthor{Lingxiao Wang}{yyy}
\icmlauthor{Aadirupa Saha}{comp}
\icmlauthor{Nati Srebro}{yyy}
\end{icmlauthorlist}

\icmlaffiliation{yyy}{TTIC}
\icmlaffiliation{comp}{Apple}

\icmlcorrespondingauthor{Kumar Kshitij Patel}{kkpatel@ttic.edu}

\icmlkeywords{Machine Learning, ICML}

\vskip 0.3in
]



\printAffiliationsAndNotice{This work was done while AS was visiting TTIC.}  

\begin{abstract}
We study the problems of \emph{distributed online and bandit convex optimization} against an adaptive adversary. We aim to minimize the average regret on $M$ machines working in parallel over $T$ rounds with $R$ intermittent communications. Assuming the underlying cost functions are convex and can be generated adaptively, our results show that \emph{collaboration is not beneficial when the machines have access to the first-order gradient information at the queried points}. This is in contrast to the case for stochastic functions, where each machine samples the cost functions from a fixed distribution. Furthermore, we delve into the more challenging setting of \emph{federated online optimization with bandit (zeroth-order) feedback}, where the machines can only access values of the cost functions at the queried points. The key finding here is \emph{identifying the high-dimensional regime where collaboration is beneficial and may even lead to a linear speedup in the number of machines}. 
We further illustrate our findings through federated adversarial linear bandits by developing novel distributed single and two-point feedback algorithms. 
Our work is the first attempt towards a systematic understanding of federated online optimization with limited feedback, and it attains tight regret bounds in the intermittent communication setting for both first and zeroth-order feedback. Our results thus bridge the gap between stochastic and adaptive settings in federated online optimization.
\end{abstract}

\section{Introduction}\label{sec:introduction}
We consider the following distributed regret minimization problem on $M$ machines over a horizon of length $T$: 
\begin{align}\label{eq:problem}
\sum_{m\in[M], t\in [T]}f_t^m(x_t^m) - \min_{\norm{x^\star}\leq B} \sum_{m\in[M], t\in [T]}f_t^m(x^\star),
\end{align} 
where $f_t^m$ is an arbitrary convex cost function observed by machine $m$ at time $t$, $x_t^m$ is the model the machine plays based on available history, and the comparator $x^\star$ is shared across the machines. This formulation is a natural extension of classical federated optimization \citep{mcmahan2016communication, kairouz2019advances}, but sequential decision-making and potentially adversarial cost functions bring new challenges to the problem. In particular, as opposed to usual federated optimization, where we want to come up with one good final model, the goal is to develop a sequence of instantaneously good models. This setup captures many real-world applications, such as mobile keyboard prediction \citep{hard2018federated, chen2019federated, google_ai_blog_2021}, self-driving vehicles \citep{elbir2020federated, nguyen2022deep}, voice assistants \citep{hao2020apple,googleVoice}, and recommendation systems \citep{shi2021federated,liang2021fedrec++,khan2021payload}. All these applications involve sequential decision-making across multiple machines where data is generated in real-time, and it might not be stored for future use due to memory or privacy concerns. Furthermore, these services must continually improve while deployed, so all the played models must be reasonably good. These challenges require designing new methods to benefit from collaboration while attaining small regret.

One approach is to solve the regret minimization problem \eqref{eq:problem} in a federated manner, i.e., by storing and analyzing the data locally at each machine while only communicating the models intermittently. This reduces the communication load and also mitigates some privacy concerns.  To capture this, we consider the \textit{intermittent communication} (IC) setting \citep{woodworth2018graph, woodworth2021min} where the machines work in parallel and are allowed to communicate $R$ times with $K$ time steps in between communication rounds. The IC setting has been widely studied over the past decade \citep{zinkevich2010parallelized, cotter2011better, dekel2012optimal, zhang2013information, zhang2015divide, shamir2014communication, stich2018local, dieuleveut2019communication, woodworth2020local, bullins2021stochastic, patel2022towards}, and it captures the expensive nature of communication in collaborative learning, such as in cross-device federated learning \citep{mcmahan2016communication, kairouz2019advances}.

Although many recent attempts \citep{Wang2020Distributed,dubey2020differentially,huang2021federated,li2022asynchronous,he2022simple,gauthier2022resource, gogineni2022communication, dai2022addressing, kuh2021real, mitra2021online} have been made towards tackling problem \eqref{eq:problem}, most existing works \citep{Wang2020Distributed,dubey2020differentially,huang2021federated,li2022asynchronous} study problem \eqref{eq:problem} by focusing on the \textbf{\textit{``stochastic"}} setting, where $\{f_t^m\}$'s are sampled from distributions specified in advance. However, real-world applications may have distribution shifts, un-modeled perturbations, or even an adversarial sequence of cost functions, all of which violate the fixed distribution assumption, and thus the above problem setups fail. To alleviate this issue, in this paper, we extend our understanding of distributed online and bandit convex optimization, i.e., problem \eqref{eq:problem}, to \textbf{\textit{``adaptive"}} adversaries that could potentially generate a worst-case sequence of cost functions. Although some recent works have underlined the importance of the adaptive setting \citep{gauthier2022resource, gogineni2022communication, dai2022addressing, kuh2021real, mitra2021online, he2022simple}, our understanding of the optimal regret guarantees for problem \eqref{eq:problem} with different classes of algorithms is still lacking.

\paragraph{\textbf{Our Contributions. }} Our work is the first attempt toward a systematic understanding of federated online optimization with limited feedback. We attain tight regret bounds (in some regimes) in the intermittent communication setting for first and zeroth-order feedback. We also bridge the gap between stochastic and adaptive settings in federated online optimization.
The following summarizes the main contributions of our work:
\begin{itemize}
    \item \textbf{Federated Online  Optimization with First-Order (Gradient) Feedback. } We first show in Section \ref{sec:first} that, under usual assumptions, there is no benefit of collaboration if all the machines have access to the gradients, a.k.a. first-order feedback for their cost functions. Specifically, in this setting, running online gradient descent on each device without any communication is min-max optimal for problem \eqref{eq:problem} (c.f., Theorems \ref{thm:first_lip}, \ref{thm:first_smth}). This motivates us to study weaker forms of feedback/oracles accesses where collaboration can be provably useful. 

    \item \textbf{Federated  Online Optimization with Zeroth-Order (Bandit) Feedback. } We then study the problem of federated adversarial linear bandits in Section \ref{sec:falb}, which is an important application of problem \eqref{eq:problem} and has not been studied in federated learning literature. We propose a novel federated projected online stochastic gradient descent algorithm to solve this problem (c.f., Algorithm \ref{alg:fed_pogd}). Our algorithm only requires single bandit feedback for every function. We show that when the problem dimension $d$ is large, our algorithm can outperform the non-collaborative baseline, where each agent solves its problem independently. Additionally, \textbf{when $\boldsymbol{d}$ is larger than $\boldsymbol{\ooo\big(M\sqrt{T/R}\big)}$, we prove that the proposed algorithm can achieve $\boldsymbol{\ooo\big(d/\sqrt{MT}\big)}$ average regret}, where $M$ is the number of agents and $R$ is the communication round (c.f., Theorem \ref{thm:favlb}). These results suggest that one can benefit from collaborations in the more challenging setting of using bandit feedback when the problem dimension is large.
    
    \item \textbf{Bandit Feedback Beyond Linearity. } We next consider the general (non-linear) problem \eqref{eq:problem} with bandit feedback in Section \ref{sec:2falb}, and study a natural variant of \textsc{FedAvg} equipped with a stochastic gradient estimator using two-point feedback \citep{shamir2017optimal}. We show that collaboration reduces the variance of the stochastic gradient estimator and is thus beneficial for problems of high enough dimension (c.f., Theorems \ref{thm:bd_grad_first_stoch} and \ref{thm:smooth_first_stoch}). We prove a linear speedup in the number of machines for high-dimensional problems, mimicking the stochastic setting with first-order gradient information \citep{woodworth2021min, woodworth2021even}. 
    
    \item \textbf{Tighter Rates with Two-Point Bandit Feedback.} Additionally, we show that (c.f., Corollary \ref{coro:linear}) one can achieve better regret for federated adversarial linear bandits using two-point feedback, indicating that multi-point feedback can be beneficial for federated adversarial linear bandits.

    \item \textbf{Characterizing the General Class of Problem. } Finally, we characterize the full space of related min-max problems in distributed online optimization, thus connecting the adversarial and stochastic versions of problem \eqref{eq:problem} (see Section \ref{sec:related}). Despite over a decade of work, this underlines how we understand only a small space of problems in distributed online optimization. This also identifies problems at the intersection of sequential and collaborative decision-making for future research.
\end{itemize}

\section{Problem Setting}\label{sec:setting}
 This section introduces definitions and assumptions used in our analysis. We formalize our adversary class and algorithm class. We also specify the notion of min-max regret.

\paragraph{\textbf{Notation.}} We denote the horizon by $T=KR$.  $\succeq, \preceq, \cong$ denote inequalities up to numerical constants. We denote the average function by $f_t(\cdot)~:=~\sum_{m\in[M]}f_t^m(\cdot)/M$ for all $t\in[T]$. We use $\mathbbm{1}_{A}$ to denote the indicator function for the event $A$. $\bb_2(B)\subset \rr^d$ denotes the $L_2$ ball of radius $B$ centered at zero. We suppress the indexing in the summation $\sum_{t\in[T], m\in[M]}$ to $\sum_{t,m}$ due to space constraints wherever the usage is clear.

\paragraph{\textbf{Function Classes.}} We consider several common function classes \citep{saha2011improved, hazan2016introduction} in this paper:
\begin{enumerate}
    \item $\boldsymbol{\fff^{G, B}}$, the class of convex, differentiable, and $G$-Lipschitz functions, i.e., $$\forall x,y\in\rr^d,\ |f(x)-f(y)|\leq G\norm{x-y}$$ with bounded optima $x^\star\in \bb_2(B)$.
    \item $\boldsymbol{\fff^{H, B}}$, the class of convex, differentiable, and $H$-smooth functions, i.e., $$\forall x,y\in\rr^d,\ \norm{\nabla f(x)-\nabla f(y)}\leq H\norm{x-y}$$ with bounded optima $x^\star\in \bb_2(B)$. We also define $\boldsymbol{\fff^{G, H, B}} := \boldsymbol{\fff^{G, B}}\cap \boldsymbol{\fff^{H, B}}$, i.e., the class of Lipschitz and smooth functions.
    \item $\boldsymbol{\fff_{lin}^{G, B}} \subset \boldsymbol{\fff^{G, B}}$, which includes linear cost functions with gradients bounded by $G$. This includes the cost functions in federated adversarial linear bandits, discussed in Section \ref{sec:falb}.
\end{enumerate}
\todo{Do we need to have the bounded comparator condition in the function class definition?}


\paragraph{\textbf{Adversary Model.}} Note that in the most general setting, each machine will encounter arbitrary functions from a class $\fff$ at each time step. Our algorithmic results are for this general model, which is usually referred to as an \textbf{\textit{``adaptive"}} adversary. Specifically, we allow the adversary's functions to depend on the past sequence of models played by the machines but not on the randomness used by the machines, i.e., $\{f^m_t\}^{m\in[M]}$ for all $t$ can depend on $\big(\{x_i^m\}_{i\in[t-1]}^{m\in[M]}, A\big)$, where $A$ is the algorithm being used by the machines. 

We also consider a weaker \textbf{\textit{``stochastic"}} adversary model to explain some baseline results. More specifically, the adversary cannot adapt to the sequence of the models used by each machine but must fix a distribution in advance for each machine, i.e., $\forall m\in[M],\ \ddd_m\in \Delta(\fff)$ such that at each time $t\in[T]$, $f_t^m\sim \ddd_m$. An example of this less challenging model is \textit{distributed stochastic optimization} where $f_t^m(\cdot):= f(\cdot; z_t^m\sim \ddd_m)$ for $f(\cdot;\cdot)\in \fff$ and $z_t^m$ is a data-point sampled from the distribution $\ddd_m$. For the stochastic adversaries throughout we denote $$F_m(x) := \ee_{f\sim \ddd_m}[f(x)],\ F(x) := \frac{1}{M}\sum_{m\in[M]}F_m(x),$$ where with a slight abuse of notation we suppress $z\sim \ddd_m$. The distinction between different adversary models is discussed further in Appendix \ref{sec:related}.

\paragraph{\textbf{Oracle Model.}} We consider three kinds of access to the cost functions in this paper. Each machine $m\in[M]$ for all time steps $t\in[T]$ has access to one of the following:
\begin{enumerate}
\item \textbf{Gradient} of $f_t^m$ at a single point, a.k.a., first-order feedback.
\item \textbf{Function value} of $f_t^m$ at a single point, a.k.a., one-point bandit feedback.
\item \textbf{Function values} of $f_t^m$ at two different points, a.k.a., two-point bandit feedback. 
\end{enumerate}
\begin{remark}
Note that we always look at the regret at the queried points. Thus in the two-point feedback setting, if the machine $m$ queries at points $x_t^{m,1}$ and $x_t^{m,2}$ at time $t$, it incurs the cost $f_t^m(x_t^{m,1}) + f_t^m(x_t^{m,2})$ (c.f., Theorem \ref{thm:bd_grad_first_stoch}).
\end{remark}
\textbf{Algorithm Class.} We assume the algorithms on each machine can depend on all the history it has seen. Since we are in the IC setting, at time $t$ on machine $m$, the algorithm $A$'s output can only depend on $$\left(f_1^{1:M}, \ldots, f_{\tau(t)}^{1:M}, f_{\tau(t)+1}^{m},\ldots, f_{t-1}^{m}\right),$$ where $\tau(t)$ is the last time step smaller than or equal to $t$ where communication happened. We assume $T=KR$ so that $\tau(t) := t \mod K$. In other words, the algorithms' output on a machine can depend on all the information (gradients or function values) it has seen on the machine or other machines' information communicated to it. We denote this class of algorithms by $\aaa_{IC}$ and add super-scripts $1$, $0$, $(0,2)$ to denote first-order, one-point bandit, and two-point bandit feedback. Thus, we consider three algorithm classes $\aaa_{IC}^1,\ \aaa_{IC}^0,\ \aaa_{IC}^{0,2}$ in this paper.

Finally, we consider two more assumptions controlling how similar the cost functions look across machines and the average regret at the comparator \citep{srebro2010optimistic}:


\begin{assumption}[Bounded First-Order Heterogeneity]
\label{ass:heterogeneity}
For all $t\in [T],\ x\in \rr^d$, $$\frac{1}{M}\sum_{m\in[M]}\norm{\nabla f_t^m(x)- \nabla f_t(x)}^2 \leq \zeta^2\leq 4G^2.$$
\end{assumption}

\begin{remark}[Bounded First-Order Heterogeneity for Stochastic Setting]\label{rem:heterogeneity}
    In the stochastic setting, \citet{woodworth2020minibatch} consider a related but more relaxed assumption, i.e.,  $\forall\ x\in\rr^d$, $$\frac{1}{M}\sum_{m\in[M]}\norm{\nabla F_m(x) - \nabla F(x)}^2 \leq \zeta^2\leq 4G^2.$$ This assumption does not require the gradients at each time step to be point-wise close but requires them to be close only on average over the draws from the machines' distributions.
\end{remark}


\begin{assumption}[Bounded Optimality at Optima]
\label{ass:loss_optimal}
For all $x^\star\in \arg\min_{x\in \bb_2(B)}\sum_{t\in [T]}f_t(x),$ $$\frac{1}{T}\sum_{t\in[T]}f_t(x^\star)~\leq~F_\star.$$ For non-negative functions in $\fff^{H, B}$, this implies, $$\frac{1}{T}\sum_{t\in[T]}\norm{\nabla f_t(x^\star)}^2~\leq~HF_\star,$$ (c.f., Lemma 4.1 in \citet{srebro2010optimistic}).
\end{assumption}
\todo{Maybe we should define this as an irreducible loss instead of an assumption?}

\begin{remark}[Bounded Optimiality for Stochastic Setting]\label{rem:loss_optimal}
When the functions are generated stochastically, denote $F_\star:= \min_{x\in \bb_2(B)}F(x)$ as the minimum realizable function value. Then assuming $\ddd_m$ is supported on functions in $\fff^{H, B}$, implies for all $t\in[T]$, $$\ee_{\{f_t^m\sim \ddd_m\}_{m\in[M]}}\norm{\frac{1}{M}\sum_{m\in[M]}\nabla f_t^m(x^\star)}^2~\leq~HF_\star$$ (c.f., Lemma 3 in \citet{woodworth2021even}).
\end{remark}

\paragraph{\textbf{Min-Max Regret.}} We now define our problem class. We use $\ppp_{M, K, R}(\fff) := \fff^{\otimes MKR}$ to denote all selections of $MKR$ functions from a class $\fff$. We use the argument $\zeta$, $F_\star$ to further restrict this to selections that satisfy Assumptions \ref{ass:heterogeneity} and \ref{ass:loss_optimal} respectively. In this paper, we consider four problem classes: $$\ppp_{M, K, R}(\fff^{G, B}, \zeta),\ \ppp_{M, K, R}(\fff^{H, B}, \zeta, F_\star),\ \ppp_{M, K, R}(\fff^{G, H, B}, \zeta, F_\star), \text{ and } \ppp_{M, K, R}(\fff^{G, B}_{lin}, \zeta).$$ We are interested in characterizing the min-max regret for these classes. In particular, for a problem class $\ppp$ and algorithm class $\aaa$, we want to characterize up to numerical constants the following quantity:

\begin{align}\label{eq:P1}
    \rrr(\ppp) := \min_{A\in\aaa} \max_{P\in \ppp} \ee_{A}\frac{1}{MT}\rb{\sum_{t\in [T], m\in[M]}f_t^m(x_t^m)- \min_{x^\star\in \bb_2(B)}\sum_{t\in [T], m\in[M]}f_t^m(x^\star)},
\end{align}
where $A$ is a randomized algorithm producing models $x_t^m$'s. We add super-scripts $0$, $(0,2)$, $1$ to $\rrr$ to denote the algorithm class, i.e.,  $\aaa_{IC}^0,\ \aaa_{IC}^{0,2},\ \aaa_{IC}^1$ respectively. 

The second player, i.e., the adversary selecting the functions, does not benefit from randomization for problem \eqref{eq:P1}, which is why we ignore that here\footnote{It makes sense to make the randomization on the second player explicit when comparing to a weaker comparator, c.f., problem (P3) in Appendix \ref{sec:related}.}. It is also important to note that the max player does not have access to the randomness of the min player (c.f., problem (P1) in appendix \ref{sec:related}). Finally, note again that the max player can adapt to the sequence of models played by $A$, which is why we call this regret minimization against an adaptive adversary. In this paper, we want to characterize the min-max value of this game, i.e., $\rrr(\ppp)$. 

Most existing work \citep{khaled2020tighter,woodworth2020minibatch, patel2022towards, wang2022unreasonable} in federated learning has instead focused on characterizing the min-max regret for the following simpler problem\footnote{These works aim to upper bound the function sub-optimality at a single point, but most of their analyses actually provide regret guarantees that are converted to function sub-optimality bounds at the averaged iterate by applying Jensen's inequality.}:
\begin{align}
    & \rrr_{stoc.}(\ppp)\nonumber := 
    \\
    &\min_{A\in\aaa} \max_{\{\ddd_m\in \Delta(\fff)\}} \ee_{A, \{f_t^m\sim \ddd_m\}_{t\in[T]}^{m\in[M]}} \rb{\frac{1}{MT}\sum_{t, m}f_t^m(x_t^m) - \min_{x^\star\in \bb_2(B)}\frac{1}{MT}\sum_{t, m}\ee_{f_t^m\sim \ddd_m}[f_t^m(x^\star)]}\nonumber\\
    &= \min_{A\in\aaa} \max_{\{\ddd_m\in \Delta(\fff)\}} \ee_{A} \rb{\frac{1}{MT}\sum_{t, m}\ee_{f_{t}^m\sim \ddd_m}[f_t^m(x_t^m)] - \min_{x^\star\in \bb_2(B)}\frac{1}{MT}\sum_{t, m}\ee_{f_t^m\sim \ddd_m}[f_t^m(x^\star)]}\nonumber\\
    &= \min_{A\in\aaa} \max_{\{\ddd_m\in \Delta(\fff)\}}  \ee_{A}\rb{\frac{1}{MT}\sum_{t, m}F_m(x_t^m) - \min_{x^\star\in \bb_2(B)}\frac{1}{M}\sum_{m}F_m(x^\star)},
\end{align}
where $F_m(\cdot) := \ee_{f\sim \ddd_m}[f(\cdot)]$. This problem has a ``stochastic" adversary. The min-max complexity of this easier problem is understood in the ``homogeneous" setting when the machines have the same distribution \citep{woodworth2021min, woodworth2021minimax}. However, it is not yet fully understood in the ``heterogeneous'' setting where these distributions can differ across the machines \citep{woodworth2020minibatch, wang2022unreasonable, glasgow2022sharp}. In general, note that for any problem class $\ppp$, $\rrr_{stoc.}(\ppp)\leq \rrr(\ppp)$, and we are interested in understanding the higher $\rrr(\ppp)$. We further discuss some related distributed problems and their potential applications in Appendix \ref{sec:related}.


\section{Collaboration Does Not Help with First-Order Feedback}\label{sec:first}
This section will explain why collaboration does not improve the min-max regrets for the adaptive problem (P2) while using first-order oracles, even though collaboration helps for the stochastic problems (P9) and (P10). Specifically, we will characterize $\rrr(\ppp(\fff^{G}), \aaa_{IC}^1)$ and $\rrr(\ppp(\fff^{H}), \aaa_{IC}^1)$, where recall that $\aaa_{IC}^1$ is the class of algorithms with access to one gradient per cost function on each machine at each time step. This problem is well understood in the serial setting, i.e., when $M=1$. For instance, online gradient descent (OGD) \citep{zinkevich2010parallelized, hazan2016introduction} attains the min-max regret for both the adversary classes $\ppp(\fff^{G})$ and $\ppp(\fff^{H})$ \citep{woodworth2021even}. This raises the question of whether a distributed version of OGD can also be shown to be min-max optimal when $M>1$. The answer is, unfortunately, no! To state this more formally, we first introduce a trivial non-collaborative algorithm in the distributed setting: we run online gradient descent independently on each machine without any communication (see Algorithm \ref{alg:NC_OGD}). We prove the following result for this algorithm (which lies in the class $\aaa_{IC}^1$), essentially showing that the non-collaborative baseline is min-max optimal.     

{\LinesNumbered
\SetAlgoVlined
\begin{algorithm2e}[!thp]
	\caption{Non-collaborative \textsc{OGD} ($ \eta$)}\label{alg:NC_OGD}
		Initialize $x_0^m=0$ on all machines $m\in[M]$\\
            \For{$t\in\{0, \ldots, KR-1\}$}{
		      \For{$m \in [M]$ \textbf{in    parallel}}{
                    Play model $x_t^m$ and see function $f_t^m(\cdot)$\\
                    \textcolor{red}{\textbf{Incur loss} $f_t^m(x_t^m)$}\\
                    Compute gradient at point $x_t^m$ as $\nabla f_t^m(x_t^m)$ \\  
                    $x_{t+1}^m \gets x_{t}^m - \eta\cdot \nabla f_t^m(x_t^m)$
                }
		}	
\end{algorithm2e}

\begin{theorem}[Optimal Bounds with Non-collaborative \textsc{OGD} for Lipschitz Functions]
\label{thm:first_lip}
Algorithm \ref{alg:NC_OGD} incurs an average regret of $\ooo\rb{GB/\sqrt{T}}$ against the adversary class $\ppp(\fff^{G}, \zeta)$, where $B$ is the norm of the comparator. This regret is optimal against adversaries from $\ppp(\fff^{G, B}, \zeta)$ among algorithms in $\aaa_{IC}^1$, where $0\leq\zeta\leq 2G$, i.e., $\rrr(\ppp(\fff^{G}, \zeta), \aaa_{IC}^1) = \Theta(GB/\sqrt{T})$.  
\end{theorem}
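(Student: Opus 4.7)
The plan is to handle the upper bound and the matching lower bound separately, both via reduction to the well-understood serial ($M=1$) OCO problem.

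\textbf{Upper bound.} Algorithm~\ref{alg:NC_OGD} runs vanilla OGD independently on each machine $m$ with its own sequence $\{f_t^m\}_{t\in[T]}\subset \fff^{G,B}$, so I would just invoke the standard OGD regret bound per machine. Specifically, the textbook potential argument $\norm{x_{t+1}^m-x^\star}^2 \leq \norm{x_t^m-x^\star}^2 - 2\eta\inner{\nabla f_t^m(x_t^m)}{x_t^m-x^\star}+\eta^2 G^2$ telescopes and, combined with convexity, yields $\sum_t(f_t^m(x_t^m)-f_t^m(x^\star))\leq \tfrac{B^2}{2\eta}+\tfrac{\eta G^2 T}{2}$. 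Tuning $\eta=B/(G\sqrt{T})$ gives $GB\sqrt{T}$ per-machine regret; summing over $m\in[M]$ and normalizing by $MT$ yields the claimed $\ooo(GB/\sqrt{T})$ average regret, with the $M$ factor cancelling exactly because both the total regret and the normalization are linear in $M$.

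\textbf{Lower bound.} For optimality I would reduce to the classical serial OCO lower bound. Restrict the max player to the sub-class of adversaries that picks identical linear cost functions across machines, i.e.\ $f_t^m(x)=\inner{g_t}{x}$ for all $m\in[M]$; this trivially satisfies $\zeta=0$ and thus lies in $\ppp_{M,K,R}(\fff^{G,B}_{lin},\zeta)\subset\ppp_{M,K,R}(\fff^{G,B},\zeta)$ for any $\zeta\geq 0$. Given an arbitrary $A\in\aaa_{IC}^1$, define the average iterate $\bar x_t:=\tfrac{1}{M}\sum_m x_t^m$. By linearity,
\begin{align*}
\frac{1}{MT}\sum_{t,m}\!\rb{f_t(x_t^m)-f_t(x^\star)} \;=\; \frac{1}{T}\sum_t \inner{g_t}{\bar x_t-x^\star}.
\end{align*}
Under this homogeneous adversary every machine observes the identical gradient $g_s$ at every step $s$, so $\bar x_t$ is a (possibly randomized) function of only $g_1,\ldots,g_{t-1}$, regardless of the IC communication pattern. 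Hence the map $g_{1:t-1}\mapsto \bar x_t$ is a serial randomized OCO algorithm on linear losses in $\bb_2(B)$, and the classical $\Omega(GB\sqrt{T})$ lower bound (e.g.\ the one-dimensional Rademacher construction with $g_t\sim \mathrm{Unif}\{-G,+G\}$ and $x^\star=\pm B$) applies, giving $\Omega(GB/\sqrt{T})$ average regret and matching the upper bound.

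\textbf{Main obstacle.} The chief subtlety is making the reduction from the distributed randomized protocol to a serial OCO algorithm fully rigorous. In particular, any asymmetry between machines inside a round must be due purely to internal algorithmic randomness (since the observed gradients are identical), and the adversary does not see this randomness by assumption. I would resolve this via Yao's minimax principle: fix the randomness of $A$, take expectation over a random Rademacher hard instance, and note that the standard serial lower bound is already proved in expectation against an oblivious random adversary, so transferring it through $\bar x_t$ loses nothing. A minor bookkeeping step is to verify the comparator constraint $x^\star\in\bb_2(B)$ is binding for the chosen hard instance, which is immediate for the one-dimensional Rademacher construction. Conceptually, the proof makes explicit why collaboration is useless here: with an adaptive adversary the $M$ machines effectively face a length-$T$ (not length-$MT$) serial OCO instance, so the $\sqrt{M}$ speedup seen in the stochastic setting is unavailable.
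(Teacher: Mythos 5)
Your proposal is correct and follows essentially the same route as the paper: the upper bound via per-machine OGD with the standard potential argument, and the lower bound by restricting to an adversary that plays identical i.i.d.\ zero-mean linear losses on all machines (so $\zeta=0$) and invoking the classical Rademacher construction. The only cosmetic difference is that you route the lower bound through the average iterate $\bar x_t$ and a reduction to a serial algorithm, whereas the paper observes directly that the player's expected loss vanishes for \emph{any} sequence of models under the zero-mean oblivious construction, so only the comparator term $\ee\bigl[\lvert\sum_t u_t\rvert\bigr]\succeq\sqrt{T}$ matters — this makes your Yao-style bookkeeping unnecessary, but your argument is sound as written.
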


\begin{proof}
We first prove the upper bound on the average regret of non-collaborative OGD and then show that it is optimal, i.e., equals 
$\rrr\rb{\ppp(\fff^{G}, \zeta), \aaa_{IC}^1}$. Note the following bound is always true for any stream of functions and sequence of models; we are just changing the comparator:
\begin{align*}
    \frac{1}{M}\sum_{m\in[M]}\rb{\sum_{t\in [T]}f_t^m(x_t^m) - \min_{x^{m, \star}\in \bb_2(B)}\sum_{t\in [T]}f_t^m(x^m)} \geq \frac{1}{M}\sum_{t\in [T], m\in[M]}f_t^m(x_t^m) - \min_{x^\star\in \bb_2(B)}\sum_{t\in [T]}f_t(x).
\end{align*}
This means we can upper bound $\rrr\rb{\ppp(\fff^{G}, \zeta), \aaa_{IC}^1}$ by running online gradient descent (OGD) independently on each machine without collaboration, i.e.,
\begin{align}\label{eq:GB1UB}
    \rrr_{M,K,R}\rb{\ppp(\fff^{G}, \zeta), \aaa_{IC}^1} \preceq \rrr_{1,K, R}\rb{\ppp(\fff^{G}), \aaa_{IC}^1} \cong \frac{GB}{\sqrt{T}}.
\end{align} 
The min-max rate for a single machine follows classical results using vanilla OGD \citep{zinkevich2010parallelized} (c.f., Theorem 3.1 by \citet{hazan2016introduction}). Now we prove that this average regret is optimal. Recall that we want to understand the problem (P2)'s lower bounds. Note that to lower bound (P2), we can lower bound any children problems in Figure \ref{fig:hierarchy}. 
In particular, from figure \ref{fig:hierarchy}, we can see that $(P2)\succeq (P4) \succeq (P5)$\footnote{We use $\preceq, \succeq$  to compare the problems by referring to their min-max regrets.} and then note that for the adversary in $(P5)$, $\zeta=0$ by design as all the machines see the same function. Furthermore, to lower bound problem (P5), we can lower bound the following quantity, as $\fff^G_{lin}\subset \fff^G$,
\begin{align*}
    \min_{A\in\aaa_{IC}^1}\max_{\ddd\in \Delta(\fff^{G}_{lin})}  \ee_{A, \{f_t\sim \ddd\}_{t\in[T]}}\sb{\frac{1}{MT}\sum_{t\in [T], m\in[M]}f_t(x_t^m) - \min_{x^\star\in \bb_2(B)}\frac{1}{T}\sum_{t\in [T]}f_t(x^\star)}.
\end{align*}
In other words it is sufficient to specify a distribution $\ddd\in \Delta(\fff^{G}_{lin})\subset \Delta(\fff^{G})$ such that for \textbf{any} sequence of models $\{x_t^m\}_{t\in[T]}^{m\in[M]}$, 
$$\ee_{\{f_t\sim \ddd\}_{t\in[T]}}\sb{\frac{1}{MT}\sum_{m,t}f_t(x_t^m) - \min_{x^\star\in \bb_2(B)}\frac{1}{T}\sum_{t}f_t(x^\star)}\succeq \frac{GB}{\sqrt{T}}.$$
Such lower bounds are folklore in serial online convex optimization (c.f., Theorem 3.2 \citep{hazan2016introduction}). One such easy construction is choosing $f_t(x) = \inner{\beta_t}{x}$ where $\beta_t\sim \frac{G}{\sqrt{d}}\cdot Unif(\{+1, -1\}^d)$. This ensures the following,
\begin{align}
    &\ee_{\{f_t\sim \ddd\}_{t\in[T]}}\sb{\frac{1}{MT}\sum_{m,t}f_t(x_t^m) - \min_{x^\star\in \bb_2(B)}\frac{1}{T}\sum_{t}f_t(x^\star)}\nonumber\\
    &= \ee_{\cb{\beta_t\sim \frac{G}{\sqrt{d}}\cdot Unif(\{+1, -1\}^d)}_{t\in[T]}}\sb{\frac{1}{MT}\sum_{m,t}\inner{\beta_t}{x_t^m} - \min_{x^\star\in \bb_2(B)}\frac{1}{T}\sum_{t}\inner{\beta_t}{x^\star}},\nonumber\\
    &= \ee_{\cb{\beta_t\sim \frac{G}{\sqrt{d}}\cdot Unif(\{+1, -1\}^d)}_{t\in[T]}}\sb{\frac{1}{MT}\sum_{m,t}\ee_{\beta_t\sim \frac{G}{\sqrt{d}}\cdot Unif(\{+1, -1\}^d)}\sb{\inner{\beta_t}{x_t^m}}}\nonumber\\ 
    &\qquad- \ee_{\{\beta_t\sim \frac{G}{\sqrt{d}}\cdot Unif(\{+1, -1\}^d)\}_{t\in[T]}}\sb{\min_{x^\star\in \bb_2(B)}\frac{1}{T}\sum_{t}\inner{\beta_t}{x^\star}},\nonumber\\
    &= 0 - \frac{GB}{Td}\ee_{\{\beta_t\sim \cdot Unif(\{+1, -1\}^d)\}_{t\in[T]}}\sb{\min_{x^\star\in \bb_2(\sqrt{d})}\inner{\sum_{t}\beta_t}{x^\star}},\nonumber\\
    &\geq \frac{GB}{Td}\sum_{i\in[d]}\ee_{\{\beta_{t,i}\sim \cdot Unif(\{+1, -1\})\}_{t\in[T]}}\sb{-\min_{|x^\star_i|\leq 1}\inner{\sum_{t}\beta_{t,i}}{x^\star_i}},\nonumber\\
    &= \frac{GB}{T}\ee_{\{u_t\sim \cdot Unif(\{+1, -1\})\}_{t\in[T]}}\sb{-\min_{|y^\star|\leq 1}\inner{\sum_{t}u_{t}}{y^\star}},\nonumber\\
    &= \frac{GB}{T}\ee_{\{u_t\sim \cdot Unif(\{+1, -1\})\}_{t\in[T]}}\sb{\lvert\sum_{t}u_{t}\rvert},\nonumber\\
    &\geq \frac{GB}{2\sqrt{T}} \label{eq:GB1LB},
\end{align}
where the first inequality uses the fact that splitting across the dimensions can only hurt the minimization, thus making the overall quantity smaller, and the last inequality uses a standard result about the absolute sum of Rademacher random variables \footnote{For instance, see \href{https://en.wikipedia.org/wiki/Random_walk\#One-dimensional_random_walk}{this standard result} on single dimensional random walks.}. This finishes the lower bound proof. We have thus shown that the regret of the non-collaborative baseline is optimal, and combining bounds \eqref{eq:GB1UB} and \eqref{eq:GB1LB}, we can conclude that $$\rrr(\ppp(\fff^{G}, \zeta), \aaa_{IC}^1) \cong \frac{GB}{\sqrt{T}}.$$ 
This completes the proof.
\end{proof}

As the lower bound instance is linear, it is easy to show that the above theorem  also characterizes $\rrr(\ppp(\fff^{G}_{lin}, \zeta), \aaa_{IC}^1)$. We can use the same proof strategy to prove the following result for smooth functions. 

\begin{theorem}[Optimal Bounds with Non-collaborative \textsc{OGD} for Smooth Functions]\label{thm:first_smth}
   Algorithm \ref{alg:NC_OGD} incurs the optimal regret of $\Theta\rb{HB^2/T + \sqrt{HF_\star}B/\sqrt{T}}$ against adversary class $\ppp(\fff^{H}, \zeta)$, where $B$ is the norm of the comparator and $F_\star$ is from definition \ref{ass:loss_optimal}.  
\end{theorem}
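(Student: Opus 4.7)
The proof will follow the same template as Theorem \ref{thm:first_lip}: establish an upper bound by analyzing Algorithm \ref{alg:NC_OGD} on each machine independently, then match it by exhibiting a stochastic single-distribution instance (which sits at the bottom of the hierarchy and automatically satisfies $\zeta=0$), so the resulting lower bound transfers through $(P2)\succeq (P4)\succeq (P5)$.

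For the upper bound, the plan is to apply the standard OGD inequality $\sum_t (f_t^m(x_t^m) - f_t^m(x^\star)) \leq \frac{B^2}{2\eta} + \frac{\eta}{2}\sum_t \norm{\nabla f_t^m(x_t^m)}^2$ on each machine, then invoke the self-bounding property of non-negative $H$-smooth convex functions, $\norm{\nabla f(x)}^2 \leq 2H\, f(x)$, to rewrite $\sum_t \norm{\nabla f_t^m(x_t^m)}^2 \leq 2H \sum_t (f_t^m(x_t^m) - f_t^m(x^\star)) + 2H \sum_t f_t^m(x^\star)$. Rearranging, using Assumption \ref{ass:loss_optimal} to bound $\sum_t f_t^m(x^\star) \leq T F_\star$ on average across machines, and tuning $\eta \cong \min\cb{1/(4H),\ B/\sqrt{HF_\star T}}$, yields per-machine regret $\ooo(HB^2 + B\sqrt{HTF_\star})$, i.e., average regret $\ooo(HB^2/T + \sqrt{HF_\star}B/\sqrt{T})$. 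Since this bound holds for any sequence $\{f_t^m\}$ in $\fff^{H,B}$, averaging over $m$ gives the claimed upper bound on the whole problem class.

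For the lower bound, we again reduce to problem $(P5)$ where all $MT$ queries come from one distribution. The plan is to combine two constructions, one per term in the rate. For the $HB^2/T$ term, pick a fixed smooth quadratic $f_t(x) = \frac{H}{2}\norm{x - x^\star}^2$ with $\norm{x^\star}=B$: starting from the origin, the very first round already incurs loss $\frac{H B^2}{2}$, yielding average regret $\Omega(HB^2/T)$, and here $F_\star = 0$. For the $\sqrt{HF_\star}B/\sqrt{T}$ term, use a stochastic smooth instance with calibrated irreducible loss, e.g., a one-dimensional noisy quadratic $f_t(x) = \frac{H}{2}(x - z_t)^2$ where $z_t$ is a bounded symmetric random variable whose scale is tuned so that the population optimum lies in $\bb_2(B)$ and $\ee[f_t(x^\star)] = F_\star$. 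A Rademacher-style anti-concentration argument, mimicking the one at the end of the proof of Theorem \ref{thm:first_lip}, then gives the matching $\Omega(\sqrt{HF_\star}B/\sqrt{T})$ regret.

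The main obstacle is the second lower bound: calibrating the noise scale so that the per-sample function is simultaneously $H$-smooth, non-negative, and such that the expected loss at the minimizer equals $\Theta(F_\star)$ with comparator norm $B$, while still allowing a clean anti-concentration calculation. Once this calibration is in place, taking the maximum of the two lower bounds matches the upper bound and yields the claimed $\Theta\rb{HB^2/T + \sqrt{HF_\star}B/\sqrt{T}}$ rate.
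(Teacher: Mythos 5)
Your upper bound is sound: it is a self-contained re-derivation of the optimistic-rates OGD guarantee that the paper simply cites (Theorem 3 of \citet{srebro2010optimistic}), applied per machine and combined with Assumption \ref{ass:loss_optimal} exactly as you describe (noting, as you do, that the assumption controls $\frac{1}{T}\sum_t f_t(x^\star)$ only after averaging over machines). Your first lower-bound construction also works once you randomize the target --- take $x^\star = \pm B e_1$ uniformly so that \emph{any} first iterate incurs expected loss at least $HB^2/2$; as written, ``starting from the origin'' is not something you may assume about an arbitrary algorithm in $\aaa_{IC}^1$.

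The genuine gap is your second lower-bound construction. A noisy quadratic $f_t(x) = \frac{H}{2}(x - z_t)^2$ cannot certify $\Omega\rb{\sqrt{HF_\star}B/\sqrt{T}}$: it is $H$-strongly convex, so online gradient descent with step size $\sim 1/(Ht)$ achieves average regret $O(\sigma^2 \log T/(HT)) = O(F_\star \log T/T)$ on this family, where $\sigma^2 = H^2\mathrm{Var}(z) = 2HF_\star$ is the gradient variance at the optimum. The best lower bound this instance can therefore give is $\tilde{O}(F_\star/T)$, which is dominated by the $HB^2/T$ term whenever $F_\star \preceq HB^2$ and falls far short of $\sqrt{HF_\star}B/\sqrt{T}$. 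No calibration of the noise scale fixes this; the obstruction is strong convexity of the quadratic, not its parameters. The hard instance must be smooth, non-negative, and essentially \emph{non}-strongly-convex over $\bb_2(B)$ --- e.g., a smoothed hinge/Huber-type loss with label noise, where the sign of the population gradient is a coin with bias of order $1/\sqrt{T}$ and the population value at the optimum equals $F_\star$. This is precisely what the paper's proof imports: it reduces along $(P2)\succeq(P4)\succeq(P5)\succeq(P6)\succeq(P11)$ (making $\zeta = 0$ via a coordinated attack, as you also do) and then invokes the construction from Theorem 4 of \citet{woodworth2021even}, a sample-complexity lower bound depending only on $T$ against the population comparator, which transfers back up the hierarchy. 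You would need to replace the quadratic with such an instance, or cite that result, to obtain the $\sqrt{HF_\star}B/\sqrt{T}$ term.
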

\begin{proof}
    For the upper bound, we can use the upper bound for online gradient descent in the serial setting following from a classical work on optimistic rates (c.f, Theorem 3 \cite{srebro2010optimistic}). Then we use the same lower-bounding strategy as in theorem \ref{thm:first_lip} but instead lower bound (P11), and note that $(P2)\succeq (P4) \succeq (P5) \succeq (P6) \succeq (P11)$. Focusing on (P11) makes $\zeta=0$ since our attack is coordinated. Then to lower bound (P11), we use the construction and distribution as used in the proof of Theorem 4 by \citet{woodworth2021even}, which is a sample complexity lower bound that only depends on $T$, i.e., the number of samples observed from $\ddd$. This finishes the proof.
\end{proof}

\textbf{Implications of Theorems \ref{thm:first_lip} and \ref{thm:first_smth}: }
The above theorems imply that there is no benefit of collaboration in the worst case if the machines already have access to gradient information! This is counter-intuitive at first because several works have shown in the stochastic setting collaboration indeed helps \citep{woodworth2020minibatch, koloskova2020unified}. 

\begin{center}
\textbf{How do we reconcile these results?} 
\end{center}

Note that while proving theorem \ref{thm:first_lip}, we crucially rely on the chain of reductions $(P2)\succeq (P4) \succeq (P5)$. Similarly, while proving theorem \ref{thm:first_smth}, we rely on the chain of reductions $(P2)\succeq (P4) \succeq (P5)\succeq (P6) \succeq (P11)$. These reductions allow us to lower-bound the min-max regret through an adversary that can use the same function on each machine. This is the main difference w.r.t. usual federated optimization literature, where the problems of interest are (P9) and (P10), and such coordinated attacks (making $\zeta=0$) are not possible for non-degenerate distributions. This becomes clear by looking at figure \ref{fig:hierarchy}, where (P5) and (P11) are both at least as hard as (P10), and (P9) is on a different chain of reductions. This means the lower bounds in theorems \ref{thm:first_lip} and \ref{thm:first_smth} do not apply to the usual stochastic federated optimization and that there is no contradiction. Another way to view the tree is that any lower problem in the tree does not necessarily suffer from the lower bounds that apply to its parents. Thus, (P10) is not limited by the lower bound applicable for (P11).

\begin{remark}
Also, note from the above theorems that having a first-order heterogeneity bound $\zeta$ does not help because $\zeta=0$ for the problem (P4). This is unsurprising as we used a coordinated attack to give the lower bounds. However, a small $\zeta$ should intuitively help in the stochastic federated settings, i.e., for problems (P9) and (P10), as it restricts the clients' distributions. Having said that, as discussed in remark \ref{rem:heterogeneity}, the heterogeneity assumption in the stochastic setting for the problem (P9) is a bit different.    
\end{remark}

\section{Collaboration Helps with Bandit Feedback}\label{sec:falb}
In this section, we move our attention to the more challenging setting of using bandit (zeroth-order) feedback at each machine. We begin by studying one important application of problem \eqref{eq:problem}, i.e., \emph{federated adversarial linear bandits}. We then investigate the \emph{more general problem of federated bandit convex optimization with two-point feedback} in the next section.

\textbf{Federated Adversarial Linear Bandits.}
One important application of problem \eqref{eq:problem} is federated linear bandits, which has received increasing attention in recent years. However, most existing works \citep{Wang2020Distributed,huang2021federated,li2022asynchronous,he2022simple} do not consider the more challenging adaptive adversaries, leaving it unclear whether collaboration can be beneficial in this scenario.  Therefore, we propose to study federated adversarial linear bandits, a natural extension of single-agent adversarial linear bandits \citep{bubeck2012regret} to the federated optimization setting. Specifically, at each time $t\in[T]$, an agent $m\in[M]$ chooses an action $x_t^m\in\rr^d$ while simultaneously environment picks $\beta_t^m\in \bb_2(G)\subset \rr^d$. Agent $m$ then suffers the loss $f_t^m(x_t^m)=\langle\beta_t^m,x_t^m\rangle$. Our goal is to output a sequence of models $\{x_t^m\}_{t\in[T]}^{m\in[M]}$ that minimize the following expected regret
\begin{align}
\ee\sb{\sum_{m, t}\inner{\beta_t^m}{x_t^m} - \min_{\norm{x^\star}\leq B}\sum_{m, t}\inner{\beta_t^m}{x^\star}},\label{eq:problem_falb}
\end{align}
where the expectation is w.r.t. the randomness of the model selection. To solve this federated adversarial linear bandits problem, we propose a novel projected online stochastic gradient descent with one-point feedback algorithm, which we call \textsc{FedPOSGD}, as illustrated in Algorithm \ref{alg:fed_pogd}.

{\LinesNumbered
\SetAlgoVlined
\begin{algorithm2e}[!t]
	\caption{\textsc{FedPOSGD} ($ \eta,\delta$) with one-point bandit feedback}\label{alg:fed_pogd}
		Initialize $x_0^m=0$ on all machines $m\in[M]$\\
            \For{$t\in\{0, \ldots, KR-1\}$}{
		      \For{$m \in [M]$ \textbf{in parallel}}{
                    $w_t^m=\textbf{Proj}(x_t^m)$\\
		        Sample $u_t^m\sim Unif(\ss_{d-1})$, i.e., a random unit vector\\          
                    Query function $f_t^m$ at point $w_t^{m,1}=w_t^{m} + \delta u_t^m$\\
                    \textcolor{red}{\textbf{Incur loss} $f_t^m(w_t^{m} + \delta u_t^m)$}\\
                    Compute stochastic gradient at point $w_t^m$ as $g_t^m = df(w_t^m+\delta u_t^m)u_t^m/\delta$ \\  
                    \If{$(t+1)\mod K =0$}{ 
                    \textcolor{blue}{\textbf{Communicate to server:}} $\rb{x_t^m - \eta\cdot g_t^m}$\\
                    On server $x_{t+1} \gets \frac{1}{M}\sum_{m\in[M]}\left(x_t^m - \eta\cdot g_t^m\right)$\\
                    \textcolor{blue}{\textbf{Communicate to machine:}} $x_{t+1}^m \gets x_{t+1}$}
                    \Else{
                    $x_{t+1}^m \gets x_{t}^m - \eta\cdot g_t^m$
                    }
                }
		}	
\end{algorithm2e}

At the core of Algorithm \ref{alg:fed_pogd} is the gradient estimator $g_t^m$ constructed using one-point feedback (see line 8). This approach is motivated by \citet{flaxman2004online}, with the key distinction that our gradient estimator is calculated at the projected point $w_t^m=\textbf{Proj}(x_t^m)=\arg\min_{\|w\|_2\leq B}\|w-x_t^m\|_2$ (see line 4). For the linear cost function $f_t^m(x)=\langle\beta_t^m,x\rangle$, we can show that $g_t^m$ is an unbiased gradient estimator $\ee_{u_t^m}[g_t^m]=\nabla f_t^m(x)$ with the variance \citep{hazan2016introduction}:
\begin{align*}
    \ee_{u_t^m}\sb{\norm{g_t^m-\nabla f_t^m(x)}^2}\preceq \big(d\|\beta_t^m\|_2\cdot(\|x\|_2+\delta)/\delta\big)^2.
\end{align*}
Therefore, the projection step in Algorithm \ref{alg:fed_pogd} can ensure the variance of our gradient estimator is bounded. However, the extra projection step can make it difficult to benefit from collaboration when we have gradient estimators from multiple agents. To address this issue, we propose to perform the gradient update in the unprojected space, i.e., $x_{t}^m - \eta\cdot g_t^m$ (see line 14), instead of the projected space, i.e., $\textbf{Proj}(w_{t}^m - \eta\cdot g_t^m)$, which is motivated by the lazy mirror descent based methods \citep{nesterov2009primal,bubeck2015convex,yuan2021federated}. We obtain the following guarantee for Algorithm \ref{alg:fed_pogd} for federated adversarial linear bandits.

\begin{theorem}[Regret Guarantee of Algorithm \ref{alg:fed_pogd} for Federated Adversarial Linear Bandits]
\label{thm:favlb}
Against the adversary class $\ppp(\fff_{lin}^{G},\zeta)$, if we choose $\eta = \frac{B}{G\sqrt{T}}\cdot\min\left\{1, \frac{\sqrt{M}}{dB}, \frac{1}{\mathbbm{1}_{K>1}\sqrt{d B}K^{1/4}},\frac{\sqrt{G}}{\mathbbm{1}_{K>1}\sqrt{\zeta K}}\right\}$ and $\delta=B$, the queried points $\{w_t^{m,1}\}_{t,m=1}^{T,M}$ of Algorithm \ref{alg:fed_pogd} satisfy:
\begin{align*}
    \frac{1}{MKR}\sum_{t\in [KR], m\in[M]}\ee\sb{f_t^m(w_t^{m,1})-f_t^m(x^\star)} \preceq  
        \frac{GB}{\sqrt{KR}} + \frac{GB d}{\sqrt{MKR}} + \mathbbm{1}_{K>1}\cdot\rb{\frac{GB\sqrt{d}}{K^{1/4}\sqrt{R}}+\frac{\sqrt{G\zeta}B}{\sqrt{R}}},
\end{align*}
    where $x^\star \in \arg\min_{x\in\bb_2(B)}\sum_{t\in [KR]}f_t(x)$, and the expectation is w.r.t. the choice of function queries.
\end{theorem}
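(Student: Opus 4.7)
The plan is to reduce the analysis to the standard local-SGD framework, leveraging two special features of linear costs: the one-point estimator $g_t^m$ is unbiased at the query centre for linear $f_t^m$, and Euclidean projection onto $\bb_2(B)$ is 1-Lipschitz, so ``averaged projection'' errors are controlled by the usual consensus error $\|x_t^m-\bar x_t\|$. Concretely, using $\ee_u[uu^\top]=I/d$ one gets $\ee_{u_t^m}[g_t^m]=\beta_t^m$ and $\ee_{u_t^m}[f_t^m(w_t^{m,1})]=f_t^m(w_t^m)$, while $\|g_t^m\|\leq dG(B+\delta)/\delta\leq 2dG$ at $\delta=B$ gives $\ee\|g_t^m-\beta_t^m\|^2\lesssim d^2G^2$. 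Hence it suffices to bound $\tfrac{1}{M}\sum_{t,m}\ee\langle g_t^m,w_t^m-x^\star\rangle$.

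Introduce the averaged iterate $\bar x_t:=\tfrac{1}{M}\sum_m x_t^m$; since both local updates and the periodic synchronization are linear, $\bar x_{t+1}=\bar x_t-\eta\bar g_t$ at every $t$. The key decomposition is
\begin{equation*}
\tfrac{1}{M}\sum_m\langle g_t^m,w_t^m-x^\star\rangle
=\langle\bar g_t,\tilde w_t-x^\star\rangle + \langle\bar g_t,\bar w_t-\tilde w_t\rangle + \tfrac{1}{M}\sum_m\langle g_t^m-\bar g_t,w_t^m-\bar w_t\rangle,
\end{equation*}
where $\bar w_t:=\tfrac{1}{M}\sum_m w_t^m$ and $\tilde w_t:=\textbf{Proj}(\bar x_t)$. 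Projected-OGD telescoping on $\bar x_t$ (using $\|\tilde w_t-x^\star\|\leq\|\bar x_t-x^\star\|$) bounds the first piece by $B^2/(2\eta)+\tfrac{\eta}{2}\sum_t\ee\|\bar g_t\|^2$, and across-machine independence of the $u_t^m$ yields $\ee\|\bar g_t\|^2\leq\|\bar\beta_t\|^2+\tfrac{1}{M^2}\sum_m\ee\|g_t^m-\beta_t^m\|^2\leq G^2+O(d^2G^2/M)$; together these already produce the first two terms $GB/\sqrt{T}$ and $dGB/\sqrt{MT}$ of the stated bound.

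The remaining two pieces are consensus/heterogeneity errors. By 1-Lipschitzness of projection, $\|\bar w_t-\tilde w_t\|\leq\tfrac{1}{M}\sum_m\|x_t^m-\bar x_t\|$ and $\tfrac{1}{M}\sum_m\|w_t^m-\bar w_t\|^2\leq\tfrac{1}{M}\sum_m\|x_t^m-\bar x_t\|^2$; taking conditional expectation over $u_t$ and invoking Assumption~\ref{ass:heterogeneity} reduces them to $G\cdot\tfrac{1}{M}\sum_m\ee\|x_t^m-\bar x_t\|$ and $\zeta\sqrt{\tfrac{1}{M}\sum_m\ee\|x_t^m-\bar x_t\|^2}$ respectively. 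The consensus error itself is controlled by the classical local-SGD drift lemma: unrolling $x_t^m-\bar x_t=-\eta\sum_{s=\tau(t)}^{t-1}(g_s^m-\bar g_s)$, using across-time/machine martingale orthogonality of the noise and Cauchy--Schwarz on the heterogeneity bias, gives $\sum_{t=1}^{T}\tfrac{1}{M}\sum_m\ee\|x_t^m-\bar x_t\|^2\lesssim TK\eta^2 d^2G^2+TK^2\eta^2\zeta^2$, crucially with \emph{linear} (not quadratic) dependence on $K$ in the noise term.

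Assembling, applying Jensen/Cauchy--Schwarz to pass square roots outside the $t$-sum, and dividing by $T$ yields an average regret of the form $\tfrac{B^2}{\eta T}+\eta G^2+\tfrac{\eta d^2G^2}{M}+\eta dG^2\sqrt{K}+\eta GK\zeta$; the four stated terms in the theorem then arise by tuning $\eta$ as the minimum of the four step sizes balancing $B^2/(\eta T)$ against each of the remaining four (the indicator $\mathbbm{1}_{K>1}$ captures that the last two pieces vanish when $K=1$). The main obstacle is that the algorithm uses a \emph{lazy} projection: the unprojected iterate $\bar x_t$ can leave $\bb_2(B)$, and $\bar w_t$ is not itself an OGD iterate. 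Introducing the virtual $\tilde w_t=\textbf{Proj}(\bar x_t)$ and paying a consensus-error price for $\|\bar w_t-\tilde w_t\|$ rather than trying to analyse $\bar w_t$ directly is the key step that makes the standard drift analysis applicable here.
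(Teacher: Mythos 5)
Your overall architecture matches the paper's: same unbiasedness and variance bound $\sigma^2\cong d^2G^2$ for the one-point estimator at $\delta=B$, same observation that $\ee_{u}[f_t^m(w_t^m+\delta u_t^m)]=f_t^m(w_t^m)$ for linear costs, the same consensus/drift lemma giving $\frac{1}{M}\sum_m\ee\norm{x_t^m-\bar x_t}\preceq\eta(dG\sqrt{K}+\zeta K)$, and the same four-way step-size balance. Your three-way decomposition (centering at the average of projections and then relating it to $\textbf{Proj}(\bar x_t)$) is algebraically fine and only cosmetically different from the paper's, which works directly with $w_t^m$ against $\textbf{Proj}(\bar x_t)$.

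The genuine gap is the one step you flag as "the main obstacle" and then dispatch in a parenthesis: you claim that telescoping on $\bar x_t$ together with $\norm{\tilde w_t-x^\star}\leq\norm{\bar x_t-x^\star}$ yields $\sum_t\inner{\bar g_t}{\tilde w_t-x^\star}\leq B^2/(2\eta)+\tfrac{\eta}{2}\sum_t\ee\norm{\bar g_t}^2$. Non-expansiveness of projection is the tool for the \emph{eager} update $\bar x_{t+1}=\textbf{Proj}(\bar x_t-\eta\bar g_t)$; here the update is lazy, so expanding $\norm{\bar x_{t+1}-x^\star}^2$ controls $\inner{\bar g_t}{\bar x_t-x^\star}$ and leaves the cross term $\sum_t\inner{\bar g_t}{\tilde w_t-\bar x_t}$ uncontrolled. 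This term is not small: against a constant adversary $\beta_t^m\equiv-\beta$ the iterate $\bar x_t$ drifts linearly out of $\bb_2(B)$ and $\inner{\bar g_t}{\tilde w_t-\bar x_t}\approx t\eta\norm{\beta}^2$, summing to order $\eta G^2T^2$; it is only cancelled by the equally large negative term $-\norm{\bar x_T-x^\star}^2/(2\eta)$ that your bound discards. The inequality you want is true (it is the lazy-OGD/FTRL regret bound), but proving it is precisely what the paper's potential $d(x,y)=\tfrac12\norm{x}^2-\tfrac12\norm{\hat y}^2-\inner{y}{x-\hat y}$ with $d(x,y)\geq\tfrac12\norm{x-\hat y}^2$ accomplishes: telescoping $d(x^\star,\bar x_t)$ produces the negative term $-\tfrac12\norm{\bar w_{t+1}-\bar w_t}^2$, which is then used (via Young's inequality) to absorb the inner products against $\bar w_{t+1}-\bar w_t$ coming from both the noise and the gradient terms. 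You need to either import this lazy-projection potential argument or cite the lazy-OGD regret bound explicitly; as written, the justification for your first piece would fail.
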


\textbf{Implication of Theorem \ref{thm:favlb}: }
First, let us compare our results to the non-collaborative baseline as we did in Section \ref{sec:first}. More specifically, the non-collaborative baseline 
\footnote{For the non-collaborative baseline, we can run SCRIBLE \citep{hazan2016introduction} independently on each machine or run Algorithm \ref{alg:NC_OGD} using the gradient estimator proposed by \citet{flaxman2004online} with an extra projection step, i.e., $x^m_{t+1}=\textbf{Proj}(x^m_{t+1})$.} 
can obtain the average regret of $\ooo\rb{GBd/\sqrt{KR}}$. If we have $d\succeq \sqrt{K}$, \textsc{FedPOSGD} outperforms the non-collaborative baseline. Furthermore, if we have $d\succeq \sqrt{K}M$, then the average regret of \textsc{FedPOSGD} will be dominated by $\ooo\big(GBd/\sqrt{MKR}\big)$. As a result,  \textsc{FedPOSGD} achieves a \textit{``linear speed-up"} in terms of the number of machines compared to the non-collaborative baseline. Although a smaller $\zeta$ will lead to a smaller regret bound, we cannot benefit from a small $\zeta$. This is because the $\zeta$ dependent term $\sqrt{G\zeta}B/\sqrt{R}$ in the average regret bound will be dominated by other terms when the problem dimension is large, i.e., $d\succeq \sqrt{K}$.

\textbf{Limitations of Algorithm \ref{alg:fed_pogd}.} Based on the average regret, we can see that: (1) the proposed one-point feedback algorithm, i.e., \textsc{FedPOSGD}, requires \textit{an extra projection step}; (2) the averaged regret bound of \textsc{FedPOSGD} \textit{increases linearly with the problem dimension} $d$; and (3) the average regret of \textsc{FedPOSGD} \textit{cannot benefit from the small heterogeneity} in the regime where \textsc{FedPOSGD} outperforms the non-collaborative baseline. To address these issues, we propose to use a two-point feedback algorithm in the next section.

\section{Better Rates with Two-Point Bandit Feedback}\label{sec:2falb}
In this section, we study distributed bandit convex optimization with two-point feedback, i.e., at each time step, the machines can query the value (and not the full gradient) of their cost functions at two different points. We show an improved regret guarantee for general Lipschitz smooth functions and then specify the improvements for federated adversarial linear bandits. 

The two-point feedback structure is useful for single-agent bandit convex optimization, as it can help attain the optimal horizon dependence in the regret  \citep{duchi2015optimal, shamir2017optimal} using simple algorithms. We consider a general convex cost function rather than the linear cost function discussed in the last section. We analyze the online variant of the \textsc{FedAvg} or \textsc{Local-SGD} algorithm, which is commonly used in the stochastic setting. We refer to this algorithm as \textsc{FedOSGD} and describe it in Algorithm \ref{alg:fed_ogd}.
{\LinesNumbered
\SetAlgoVlined
\begin{algorithm2e}[!thp]
	\caption{\textsc{FedOSGD} ($ \eta, \delta$) with two-point bandit feedback}
    \label{alg:fed_ogd}
		Initialize $x_0^m=0$ on all machines $m\in[M]$\\
            \For{$t\in\{0, \ldots, KR-1\}$}{
		      \For{$m \in [M]$ \textbf{in parallel}}{
		        Sample $u_t^m\sim Unif(\ss_{d-1})$, i.e., a random unit vector\\          
                    Query function $f_t^m$ at points $(x_t^{m,1}, x_t^{m,2}):=(x_t^{m} + \delta u_t^m, x_t^{m} - \delta u_t^m)$\\
                    \textcolor{red}{\textbf{Incur loss} $(f_t^m(x_t^{m} + \delta u_t^m) + f_t^m(x_t^{m} - \delta u_t^m))$}\\
                    Compute stochastic gradient at point $x_t^m$ as $g_t^m = \frac{d(f(x_t^m+\delta u_t^m) - f(x_t^m-\delta u_t^m))u_t^m}{2\delta}$ \\  
                    \If{$(t+1)\mod K =0$}{ 
                    \textcolor{blue}{\textbf{Communicate to server:}} $\rb{x_t^m - \eta\cdot g_t^m}$\\
                    On server $x_{t+1} \gets \frac{1}{M}\sum_{m\in[M]}\left(x_t^m - \eta\cdot g_t^m\right)$\\
                    \textcolor{blue}{\textbf{Communicate to machine:}} $x_{t+1}^m \gets x_{t+1}$}
                    \Else{
                    $x_{t+1}^m \gets x_{t}^m - \eta\cdot g_t^m$
                    }
                }
		}	
\end{algorithm2e}

The key idea in \textsc{FedOSGD} is using the gradient estimator constructed with two-point feedback (see line 7), originally proposed by \citet{shamir2017optimal} and based on a similar estimator by \citet{duchi2015optimal}. For a smoothed version of the function $f_t^m$, i.e., $\hat{f}_t^m(x):=\ee_{u_t^m}[f_t^m(x+\delta u_t^m)]$, the gradient estimator $g_t^m$ is an unbiased estimator $\ee_{u_t^m}[g_t^m] = \nabla \hat{f}_t^m(x)$ with variance (c.f., Lemmas 3 and 5, \citep{shamir2017optimal}):
\begin{align*}
    \ee_{u_t^m}\sb{\norm{g_t^m-\nabla \hat{f}_t^m(x)}^2}\preceq dG^2,
\end{align*}
where $G$ is the Lipschitz parameter for $f_t^m$.

Equipped with this gradient estimator, we can prove the following guarantee for $\ppp_{M, K, R}(\fff^{G, B}, \zeta)$ using \textsc{FedOSGD}.

\begin{theorem}[Better Bounds with Two-Point Feedback for Lipschitz Functions]
\label{thm:bd_grad_first_stoch}
Against the adversary class class $\ppp(\fff^{G}, \zeta)$, if we choose $\eta = \frac{B}{G\sqrt{T}}\cdot\min\left\{1, \frac{\sqrt{M}}{\sqrt{d}}, \frac{1}{\mathbbm{1}_{K>1}\sqrt{K}d^{1/4}}\right\}$, and $\delta = \frac{Bd^{1/4}}{\sqrt{R}}\rb{1+ \frac{d^{1/4}}{\sqrt{MK}}}$, the queried points $\{x_t^{m,j}\}_{t,m,j=1}^{T,M,2}$ of Algorithm \ref{alg:fed_ogd} satisfy:
\begin{align*}
   \frac{1}{2MKR}\sum_{t\in [KR], m\in[M], j\in[2]}\ee\sb{f_t^m(x_t^{m,j})-f_t^m(x^\star)} \preceq  
        \frac{GB}{\sqrt{KR}} + \frac{GB\sqrt{d}}{\sqrt{MKR}} + \mathbbm{1}_{K>1}\cdot\frac{GBd^{1/4}}{\sqrt{R}},
\end{align*}
    where $x^\star \in \arg\min_{x\in\bb_2(B)}\sum_{t\in [KR]}f_t(x)$, and the expectation is w.r.t. the choice of function queries.
\end{theorem}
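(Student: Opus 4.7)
The approach is a Local SGD regret analysis run against the smoothed losses $\hat{f}_t^m(x) := \ee_{u\sim Unif(\ss_{d-1})}[f_t^m(x+\delta u)]$, with the discrepancies between the iterate $x_t^m$, the queried points $x_t^{m,j}$, and the smoothing target all handled via $G$-Lipschitzness. Concretely, I would prove three bounds and then tune $(\eta,\delta)$: (i) a smoothing and query reduction bounding $\ee[f_t^m(x_t^{m,j})-f_t^m(x^\star)]$ by $\ee[\hat{f}_t^m(x_t^m)-\hat{f}_t^m(x^\star)]+O(G\delta)$; (ii) a Local SGD regret bound on the smoothed stream; (iii) a consensus-drift bound controlling $\ee\norm{x_t^m-\bar{x}_t}$ during local steps.

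\textbf{Reductions.} By $G$-Lipschitzness, $|f_t^m(x_t^{m,j})-f_t^m(x_t^m)|\leq G\delta$ since $\norm{x_t^{m,j}-x_t^m}=\delta$, and $|f_t^m(x)-\hat{f}_t^m(x)|\leq G\delta$ pointwise. Chaining these at both the iterate and the comparator shows the left-hand side of the theorem is at most $\tfrac{1}{MT}\sum_{t,m}\ee[\hat{f}_t^m(x_t^m)-\hat{f}_t^m(x^\star)]+O(G\delta)$. Since $\hat{f}_t^m$ inherits convexity and $G$-Lipschitzness from $f_t^m$, and $g_t^m$ satisfies $\ee_{u_t^m}[g_t^m\mid x_t^m]=\nabla\hat{f}_t^m(x_t^m)$ with variance at most $CdG^2$ by the cited lemmas, the remainder reduces to a standard Local SGD regret problem on an adversarial convex stream with bounded-variance gradient estimates.

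\textbf{Local SGD regret and drift.} Let $\bar{x}_t := \tfrac{1}{M}\sum_m x_t^m$ and $\bar{g}_t := \tfrac{1}{M}\sum_m g_t^m$; the averaged iterate evolves as $\bar{x}_{t+1}=\bar{x}_t-\eta\bar{g}_t$ whether or not a communication round occurs. Convexity of $\hat{f}_t^m$ gives $\hat{f}_t^m(x_t^m)-\hat{f}_t^m(x^\star)\leq \inner{\nabla\hat{f}_t^m(x_t^m)}{x_t^m-\bar{x}_t}+\inner{\nabla\hat{f}_t^m(x_t^m)}{\bar{x}_t-x^\star}$; the first term is bounded by $G\norm{x_t^m-\bar{x}_t}$. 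Averaging the second over $m$, unbiasedness turns it into $\ee[\inner{\bar{g}_t}{\bar{x}_t-x^\star}]$ conditionally on the past. Telescoping $\norm{\bar{x}_{t+1}-x^\star}^2$ and using conditional independence across machines to bound $\ee\norm{\bar{g}_t}^2\leq G^2+\sigma^2/M$ yields $\tfrac{1}{T}\sum_t\ee\inner{\bar{g}_t}{\bar{x}_t-x^\star}\leq \tfrac{B^2}{2\eta T}+\tfrac{\eta}{2}(G^2+\sigma^2/M)$. For the drift, $x_t^m-\bar{x}_t=-\eta\sum_{s=\tau(t)}^{t-1}(g_s^m-\bar{g}_s)$ gives, after averaging across the $K$ steps of a round, $\tfrac{1}{MT}\sum_{t,m}\ee\norm{x_t^m-\bar{x}_t}^2\preceq \eta^2 K\sigma^2\preceq \eta^2 KdG^2$, so by Jensen its contribution to the regret is $O(\eta\sqrt{Kd}\,G)$ when $K>1$.

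\textbf{Tuning $(\eta,\delta)$ and main obstacle.} Assembling, $\tfrac{1}{MT}\sum_{t,m}\ee[\hat{f}_t^m(x_t^m)-\hat{f}_t^m(x^\star)]\preceq \tfrac{B^2}{\eta T}+\eta G^2\left(1+\tfrac{d}{M}+\mathbbm{1}_{K>1}\sqrt{Kd}\right)$. Taking $\eta$ as the minimum of the three canonical step sizes that balance $B^2/(\eta T)$ against $\eta G^2$, $\eta dG^2/M$, and $\eta G^2\sqrt{Kd}$ respectively produces $GB/\sqrt{KR}+GB\sqrt{d}/\sqrt{MKR}+\mathbbm{1}_{K>1}GBd^{1/4}/\sqrt{KR}$; choosing $\delta\cong Bd^{1/4}/\sqrt{R}+B\sqrt{d}/\sqrt{MKR}$ so that the $G\delta$ penalty from the reduction step equals the larger $\mathbbm{1}_{K>1}GBd^{1/4}/\sqrt{R}$ term then recovers the stated bound. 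The main obstacle is the drift analysis with variance $\sigma^2=O(dG^2)$: one must show the dimension enters the $K>1$ consensus contribution only through $\sqrt{Kd}$ rather than $Kd$, so that with $\eta$ constrained as above the drift is automatically dominated by the smoothing bias $G\delta$ in the intermittent, high-dimensional regime, leaving no residual $\zeta$ dependence in the final expression.
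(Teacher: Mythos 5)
Your overall route is the same as the paper's: reduce to the smoothed losses $\hat f_t^m$ paying $O(G\delta)$ via Lipschitzness (covering both the $f$-versus-$\hat f$ gap and the $\delta$-offset of the queried points), then run a ghost-iterate Local SGD regret analysis with an unbiased, variance-$O(dG^2)$ estimator of $\nabla\hat f_t^m$, bound the consensus drift, and tune $(\eta,\delta)$. The paper merely packages the middle step as Lemma~\ref{lemma:bd_grad_first_stoch} for a generic variance-$\sigma^2$ first-order oracle and then substitutes $\sigma^2\cong dG^2$; the content matches your steps (ii)--(iii).

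The one step that is wrong as written is the drift bound $\frac{1}{MT}\sum_{t,m}\ee\norm{x_t^m-\bar x_t}^2 \preceq \eta^2 K\sigma^2$. The increment $g_s^m-\bar g_s$ is not a martingale difference: it splits into the zero-mean noise, whose accumulated second moment is indeed $\preceq\eta^2K\sigma^2$, plus the deterministic disagreement $\nabla\hat f_s^m(x_s^m)-\frac{1}{M}\sum_{m'}\nabla\hat f_s^{m'}(x_s^{m'})$, which does not cancel over the $K$ local steps and contributes an additional $\eta^2K^2$ times the squared gradient-disagreement scale. Concretely, with two machines, zero noise, and $f_t^1=-f_t^2=G\inner{e_1}{\cdot}$ for all $t$, the squared drift is $\eta^2(t-\tau(t))^2G^2$ while your bound predicts zero. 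The correct estimate is $\ee\norm{x_t^m-\bar x_t}\preceq\eta(\sigma\sqrt K+GK)$ (Lemma 8 of Woodworth et al., which the paper invokes, even more crudely as $\eta(\sigma+G)K$). This does not sink your argument: the omitted piece adds $G\cdot\eta GK\leq GB/(d^{1/4}\sqrt R)$ to the average regret under the prescribed $\eta\leq B/(G\sqrt{TK}d^{1/4})$, which is dominated by $\mathbbm{1}_{K>1}\,GBd^{1/4}/\sqrt R$ --- but it must be carried. One resulting cosmetic difference: with the corrected drift, your sharper $\sqrt K$-scaling for the noise part makes the dominant $K>1$ term in your accounting come from the smoothing bias $G\delta$ at the prescribed $\delta$, whereas in the paper's cruder accounting it comes from the consensus error; both land on the same stated bound.
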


\textbf{Implication of Theorem \ref{thm:bd_grad_first_stoch}:}
When $K=1$, the above average regret reduces to the first two terms, which are known to be tight for two-point bandit feedback \citep{duchi2015optimal,hazan2016introduction} (see Appendix \ref{sec:lower_bounds}), making \textsc{FedOSGD} optimal. When $K>1$, we would like to compare our results to the non-collaborative baseline as we did in Section \ref{sec:first}. The non-collaborative baseline \footnote{For the non-collaborative baseline, we run Algorithm \ref{alg:NC_OGD} using the gradient estimator proposed by \citet{shamir2017optimal}.} attains the average regret of $\ooo\rb{GB\sqrt{d}/\sqrt{KR}}$. Thus, as long as $d\succeq K^2$, \textsc{FedOSGD} performs better than the non-collaborative baseline. Furthermore, if $d\succeq K^2M^2$, then the average regret of \textsc{FedOSGD} is dominated by $\ooo\big(GB\sqrt{d}/\sqrt{MKR}\big)$. Therefore, \textsc{FedOSGD} achieves a \textit{``linear speed-up"} in the number of machines compared to the non-collaborative baseline. Unfortunately, the bound doesn't improve with smaller $\zeta$.

Note that the Lipschitzness assumption is crucial for the two-point gradient estimator in Algorithm \ref{alg:fed_ogd} to control the variance of the proposed gradient estimator. While there are gradient estimators that do not require Lipschitzness or bounded gradients \citep{flaxman2004online}, they impose stronger assumptions such as bounded function values or necessitate extra projection steps (see the gradient estimator in Algorithm \ref{alg:fed_pogd}). To avoid making these assumptions, we instead focus on problems in $\ppp_{M, K, R}(\fff^{G, H}, \zeta)$  rather than problems in $\ppp_{M, K, R}(\fff^{H}, \zeta)$. 

\begin{theorem}[Better Bounds with Two-Point Feedback for Smooth Functions]
\label{thm:smooth_first_stoch}
Against the adversary class $\ppp(\fff^{G,H}, \zeta)$. If we choose appropriate $\eta, \delta$ (c.f., Lemma \ref{lem:smooth_first_stoch} in Appendix \ref{app:smth}), the queried points $\{x_t^{m,j}\}_{t,m,j=1}^{T,M,2}$ of Algorithm \ref{alg:fed_ogd} satisfy:
    \begin{align*}
       \frac{1}{2MKR}&\sum_{t\in [KR], m\in[M], j\in[2]}\ee\sb{f_t^m(x_t^{m,j})-f_t^m(x^\star)} \preceq 
        \frac{HB^2}{KR} + \frac{\sqrt{HF_\star}B}{\sqrt{KR}} + \frac{GB}{\sqrt{KR}}+ \frac{ GB\sqrt{d}}{\sqrt{MKR}} \\
        &+\mathbbm{1}_{K>1}\cdot\min\Bigg\{\frac{H^{1/3}B^{4/3}G^{2/3}d^{1/3}}{K^{1/3}R^{2/3}} + \frac{H^{1/3}B^{4/3}\zeta^{2/3}}{R^{2/3}} + \frac{\sqrt{\zeta G}Bd^{1/4}}{K^{1/4}\sqrt{R}} + \frac{\zeta B}{\sqrt{R}}, \frac{GBd^{1/4}}{K^{1/4}\sqrt{R}} + \frac{\sqrt{G\zeta}B}{\sqrt{R}}\Bigg\},    
    \end{align*}
    where $x^\star \in \arg\min_{x\in \bb_2(B)}\sum_{t\in [KR]}f_t(x)$, and the expectation is w.r.t. the choice of function queries. Furthermore, we can obtain the same average regret as in Theorem \ref{thm:bd_grad_first_stoch} with the corresponding step size.
\end{theorem}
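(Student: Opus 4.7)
The plan is to reduce the analysis to a standard virtual--averaged--iterate argument for local SGD, applied to the smoothed surrogates $\hat f_t^m(x) := \ee_{u \sim \text{Unif}(\bb_2)}[f_t^m(x+\delta u)]$, and then to combine it with the optimistic rate framework of Srebro et al. that is already invoked in Theorem \ref{thm:first_smth}. Concretely, I would first record the four properties of the two--point estimator $g_t^m$ that we need: (i) $\ee_{u_t^m}[g_t^m] = \nabla \hat f_t^m(x_t^m)$; (ii) $\ee_{u_t^m}\!\bigl[\norm{g_t^m - \nabla \hat f_t^m(x_t^m)}^2\bigr] \preceq dG^2$ (this uses Lipschitzness, c.f.\ Shamir); (iii) $|\hat f_t^m(x) - f_t^m(x)| \preceq \delta^2 H$ and $\norm{\nabla \hat f_t^m - \nabla f_t^m} \preceq \delta H$, so the smoothing bias is $O(\delta^2 H)$ per query; and (iv) each $\hat f_t^m$ inherits convexity, $G$--Lipschitzness and $H$--smoothness from $f_t^m$, and Assumptions \ref{ass:heterogeneity}--\ref{ass:loss_optimal} transfer to $\hat f_t^m$ up to an additive $O(\delta H)$ slack.

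Next I would introduce the virtual average iterate $\bar x_t := \frac{1}{M}\sum_m x_t^m$, which satisfies the clean one--step recursion $\bar x_{t+1} = \bar x_t - \eta\bar g_t$ because communication is linear. Writing $f_t = \tfrac1M\sum_m f_t^m$ and $\hat f_t = \tfrac1M\sum_m \hat f_t^m$, the average regret at the queried points $x_t^{m,j} = x_t^m \pm \delta u_t^m$ can be decomposed, via two--point symmetry and Lipschitzness, into: (a) the regret of $\{\bar x_t\}$ against $x^\star$ measured under $\hat f_t$; (b) a consensus/drift term $\tfrac{\eta H}{M}\sum_{t,m}\ee\norm{x_t^m - \bar x_t}^2$ arising from $\hat f_t(\bar x_t) - \tfrac1M\sum_m \hat f_t^m(x_t^m)$ via smoothness; and (c) a smoothing bias of order $\delta^2 H + \delta G$.

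For part (a) I would run the optimistic--rate SGD analysis (essentially Theorem 3 of Srebro et al.\ 2010) on the sequence $\bar x_t$ with noise variance $\sigma^2 \preceq dG^2/M$ (since $\bar g_t$ averages $M$ independent estimators), yielding $\tfrac{HB^2}{KR} + \tfrac{\sqrt{HF_\star}B}{\sqrt{KR}} + \tfrac{GB}{\sqrt{KR}} + \tfrac{GB\sqrt d}{\sqrt{MKR}}$ after tuning $\eta$. For part (b) I would bound drift by the standard local--SGD lemma, which via smoothness and the step recurrence gives $\ee\norm{x_t^m - \bar x_t}^2 \preceq \eta^2 K(dG^2 + K\zeta^2)$; plugging this into the smoothness term and optimizing $\eta$ produces the $H^{1/3}B^{4/3}G^{2/3}d^{1/3}/(KR)^{\{\cdot\}}$ and $H^{1/3}B^{4/3}\zeta^{2/3}/R^{2/3}$ contributions. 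Alternatively, bounding the same drift term using only $G$--Lipschitzness of $\hat f_t$ (forgoing smoothness at this step) reproduces the $GBd^{1/4}/(K^{1/4}\sqrt R) + \sqrt{G\zeta}B/\sqrt R$ family from Theorem \ref{thm:bd_grad_first_stoch}; taking whichever is smaller yields the stated $\min\{\cdot,\cdot\}$. Finally I would pick $\delta$ so that $\delta^2 H$ and $\delta G$ are absorbed into the rates above.

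The main obstacle I anticipate is the joint tuning of $\eta$ and $\delta$ so that the three sources of error, stochastic variance $dG^2/M$, heterogeneity--amplified drift $\eta^2 K^2 \zeta^2$, and smoothing bias $\delta^2 H$, are balanced \emph{simultaneously under smoothness}, where the optimistic term $\sqrt{HF_\star}B/\sqrt{KR}$ dictates that $\eta$ cannot be made too large. This is what forces the unusual $H^{1/3}$ exponents, and it is also the reason both tuning choices are reported: on some regimes the Lipschitz--only bound on drift is already tighter than the smooth one, and the $\min$ keeps the statement valid across all parameter regimes.
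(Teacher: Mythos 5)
Your proposal is correct and follows essentially the same route as the paper: the paper also reduces to a ghost/averaged-iterate local-SGD analysis for a generic variance-$\sigma^2$ first-order oracle (Lemma \ref{lem:smooth_first_stoch}), bounds the gradient-norm term both by Lipschitzness and by the self-bounding/optimistic-rate property, bounds the consensus drift term both with and without smoothness to obtain the $\min$, and then instantiates $\sigma^2 = G^2 d$ for the two-point estimator on the smoothed surrogates with $\delta$ chosen so the smoothing bias is absorbed. Your $\delta^2 H$ bias bound is a slightly sharper (but unnecessary) refinement of the paper's $G\delta$ bound, and your identification of the second branch of the $\min$ with Theorem \ref{thm:bd_grad_first_stoch} is only approximate (the paper's \eqref{eq:red_bd_three} retains a $\zeta$ and a $K^{1/4}$ that the cruder bound in Lemma \ref{lemma:bd_grad_first_stoch} does not), but neither affects correctness.
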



The above result is a bit technical, so we consider the simpler class $\fff_{lin}^{G, B}$ of linear functions with bounded gradients to interpret it. Linear functions are the ``smoothest" Lipschitz functions as their smoothness constant $H=0$. We can get the following guarantee (see Appendix \ref{sec:modify}):
\begin{corollary}[Better Bounds with Two-Point Feedback for Linear Functions]
\label{coro:linear}
    Against the adversary class $\ppp(\fff^{G}_{lin}, \zeta)$, if we choose the same $\eta$ and $\delta$ as in Theorem \ref{thm:smooth_first_stoch} (with $H=0$), the queried points $\{x_t^{m,j}\}_{t,m,j=1}^{T,M,2}$ of Algorithm \ref{alg:fed_ogd} satisfy:
    \begin{align*}
       \frac{1}{2MKR}\sum_{t\in [T], m\in[M], j\in[2]}\ee\sb{f_t^m(x_t^{m,j})-f_t^m(x^\star)} &\preceq 
        \frac{GB}{\sqrt{KR}} + \frac{ GB\sqrt{d}}{\sqrt{MKR}} + \mathbbm{1}_{K>1}\cdot \rb{\frac{\sqrt{\zeta G}Bd^{1/4}}{K^{1/4}\sqrt{R}} + \frac{\zeta B}{\sqrt{R}}},  
    \end{align*}
    where $x^\star \in \arg\min_{x\in\bb_2(B)}\sum_{t\in [KR]}f_t(x)$, and the expectation is w.r.t. the choice of function queries.
\end{corollary}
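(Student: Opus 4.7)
The plan is to specialize the proof of Theorem \ref{thm:smooth_first_stoch} to the linear setting by observing that every $f \in \fff_{lin}^{G,B}$ is trivially $H$-smooth for any $H \geq 0$, and in particular for $H = 0$. So the natural route is to re-run the proof of Theorem \ref{thm:smooth_first_stoch} with $H = 0$ and track which terms vanish. Concretely, for $f_t^m(x) = \langle \beta_t^m, x\rangle$ with $\|\beta_t^m\|_2 \leq G$, we have $\nabla f_t^m(x) = \beta_t^m$ and the second difference $\nabla f_t^m(x) - \nabla f_t^m(y)$ is identically zero, so the smoothness-based inequalities used in the general proof all collapse to equalities or trivialities.

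Next I would inspect the two places where $H$ enters the smooth-case analysis. The first is the ``optimistic rate'' style regret bound on the unaveraged iterates, which yields the $HB^2/(KR) + \sqrt{HF_\star}B/\sqrt{KR}$ offset; with $H=0$ both terms drop out regardless of $F_\star$. The second is the client-drift bound, where smoothness is used to translate a bound on iterate drift into a bound on gradient drift through $\|\nabla f_t^m(x) - \nabla f_t^m(y)\| \leq H\|x-y\|$; again with $H=0$ the drift contribution coming from the $H^{1/3}$ terms in the first argument of the $\min$ of Theorem \ref{thm:smooth_first_stoch} vanishes, leaving exactly $\mathbbm{1}_{K>1}\cdot\bigl(\sqrt{\zeta G}Bd^{1/4}/(K^{1/4}\sqrt{R}) + \zeta B/\sqrt{R}\bigr)$, which matches the corollary. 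The ``pure'' two-point feedback terms $GB/\sqrt{KR}$ and $GB\sqrt{d}/\sqrt{MKR}$ are independent of $H$ and carry over verbatim, since they come from the stepsize-tuned mean plus variance of the Shamir estimator.

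The one bookkeeping subtlety is that the smooth-case argument uses Assumption \ref{ass:loss_optimal}'s consequence $\frac{1}{T}\sum_t \|\nabla f_t(x^\star)\|^2 \leq HF_\star$ at some point to bound the noise in the stochastic gradient at the optimum. With $H=0$ this bound becomes vacuous, so I would substitute the direct (and always valid) inequality $\|\nabla f_t^m(x^\star)\|_2 = \|\beta_t^m\|_2 \leq G$ wherever that step appears. This substitution is what produces the unchanged $GB/\sqrt{KR}$ term in the final bound, and is the only place where the linear case requires an argument that is not literally a specialization of $H \to 0$.

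The main obstacle, as anticipated by the reference to Appendix \ref{sec:modify}, is purely clerical: identifying every occurrence of $H$ and of $F_\star$ in the proof of Theorem \ref{thm:smooth_first_stoch} and verifying that the stepsize tuning (both $\eta$ and $\delta$) remains well-defined when $H=0$, since some denominators in the general tuning contain $H$ and must be understood as limits that correspond exactly to the vanishing of the associated terms in the bound. Once these substitutions are made and the correct branch of the $\min$ (the first argument, whose $H$-dependent pieces disappear) is selected, the statement of Corollary \ref{coro:linear} follows directly.
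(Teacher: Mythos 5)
Your proposal is correct and takes essentially the same route as the paper's own argument in Appendix~\ref{sec:modify}: specialize the proof of Lemma~\ref{lem:smooth_first_stoch} by setting $H=0$ in the drift (red-term) bound \eqref{eq:red_bd_two}, which yields exactly the $\mathbbm{1}_{K>1}\cdot\big(\sqrt{\zeta G}Bd^{1/4}/(K^{1/4}\sqrt{R}) + \zeta B/\sqrt{R}\big)$ terms, and replace the self-bounding/optimistic bound \eqref{eq:blue_bd_two} on the averaged gradient norm by the plain Lipschitz bound \eqref{eq:blue_bd_one}, then plug in $\sigma^2 = G^2 d$ and tune $\eta,\delta$. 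The one minor imprecision is describing the $HF_\star$ step as becoming \emph{vacuous} at $H=0$ --- it is in fact inapplicable, since the self-bounding property requires non-negative functions and linear functions are not non-negative --- but the substitution you propose, $\norm{\nabla f_t^m(x)}\leq G$, is precisely the paper's fix and produces the same $GB/\sqrt{KR}$ term.
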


\textbf{Implications of Theorem \ref{thm:smooth_first_stoch} and Corollary \ref{coro:linear}:}
Unlike general Lipschitz functions, the last two terms in the average regret bound for linear functions are zero when $\zeta=0$, and the upper bound is smaller for smaller $\zeta$. In fact, when $K=1$ or $\zeta=0$, the upper bound reduces to $\ooo\big(GB/\sqrt{KR}+GB\sqrt{d}/\sqrt{MKR}\big)$, which is optimal (c.f., Appendix \ref{sec:lower_bounds}). These results show that \textsc{FedOSGD} can benefit from small heterogeneity.
More generally, when $K\leq\max\big\{1, G^2\zeta^2d, G^2d/(\zeta^2M^2)\big\}$, then \textsc{FedOSGD} again achieves the optimal average regret of $\ooo\big(GB/\sqrt{KR}+GB\sqrt{d}/\sqrt{MKR}\big)$. Recall that in this setting, the non-collaborative baseline \footnote{The same baseline as for problems in $\ppp_{M,K,R}(\fff^{G,B}, \zeta)$.} obtains an average regret of $\ooo(GB\sqrt{d}/\sqrt{KR})$. Thus, the benefit of collaboration through \textsc{FedOSGD} again appears in high-dimensional problems in $\ppp_{M, K, R}(\fff^{G, B}_{lin}, \zeta, F_\star)$ similar to what we discussed for $\ppp_{M, K, R}(\fff^{G, B}, \zeta)$. 

\paragraph{\textbf{Single v/s Two-Point Feedback.}} For federated adversarial linear bandits, we can directly apply Algorithm \ref{alg:fed_ogd}, i.e., \textsc{FedOSGD}, to solve it and achieve the average regret bound in Corollary \ref{coro:linear} as follows:
\begin{align*}
    \frac{GB}{\sqrt{KR}} + \frac{ GB\sqrt{d}}{\sqrt{MKR}} + \mathbbm{1}_{K>1}\cdot \rb{\frac{\sqrt{\zeta G}Bd^{1/4}}{K^{1/4}\sqrt{R}} + \frac{\zeta B}{\sqrt{R}}}.
\end{align*}
Recall that we can also use the one-point feedback based Algorithm \ref{alg:fed_pogd}, i.e., \textsc{FedPOSGD}, to get the following average regret bound for federated adversarial linear bandits:
\begin{align*}
    \frac{GB}{\sqrt{KR}} + \frac{GB d}{\sqrt{MKR}} + \mathbbm{1}_{K>1}\cdot\bigg(\frac{GB\sqrt{d}}{K^{1/4}\sqrt{R}}+\frac{\sqrt{G\zeta}B}{\sqrt{R}}\bigg).
\end{align*}
According to these results, \textsc{FedOSGD} significantly improves the average regret bound of \textsc{FedPOSGD} regarding the dependence on $d$ and $\zeta$. In addition, \textsc{FedOSGD} does not require the extra projection step and can benefit from the small heterogeneity compared to \textsc{FedPOSGD}. These results also imply that multi-point feedback can be
beneficial in federated adversarial linear bandits.


\section{Conclusion and Future Work}
In this paper, we show that, in the setting of distributed bandit convex optimization against an adaptive
adversary, the benefit of collaboration is very similar to the stochastic setting with first-order gradient information, where the collaboration is useful when: (i) there is stochasticity in the problem and (ii) the variance of the gradient estimator is \textit{``high"} \citep{woodworth2021min} and reduces with collaboration. There are several open questions and future research directions: 
\begin{enumerate}  
    \item Is the final term tight in Theorems \ref{thm:bd_grad_first_stoch} and \ref{thm:smooth_first_stoch}? We do not know any lower bounds in the intermittent communication setting against an adaptive adversary. 
    \item When $K$ is large, but $R$ is a fixed constant, the average regret of the non-collaborative baseline goes to zero, but our upper bounds for \textsc{FedOSGD} do not. It is unclear if our analysis is loose or if we need another algorithm.
    \item What structures in the problem can we further exploit to reduce communication, especially in the bandit feedback setting (c.f., federated stochastic linear bandits \cite{huang2021federated})? 
\end{enumerate}

\section*{Acknowledgments}
We thank the anonymous reviewers who helped us improve the writing of the paper. We would
also like to thank Ayush Sekhari, John Duchi, and Ohad Shamir for several useful discussions. This research was partly supported by the NSF-BSF award 1718970, the NSF TRIPOD IDEAL
award, and the NSF-Simons funded Collaboration on the Theoretical Foundations of Deep Learning.


\bibliography{bibliography}
\bibliographystyle{icml2023}

\newpage
\appendix
\onecolumn

\section{Comparison of Related Problems}\label{sec:related}
\begin{figure}[!tbh]
    \centering
    \includegraphics[width=0.7\textwidth]{ICML 2023/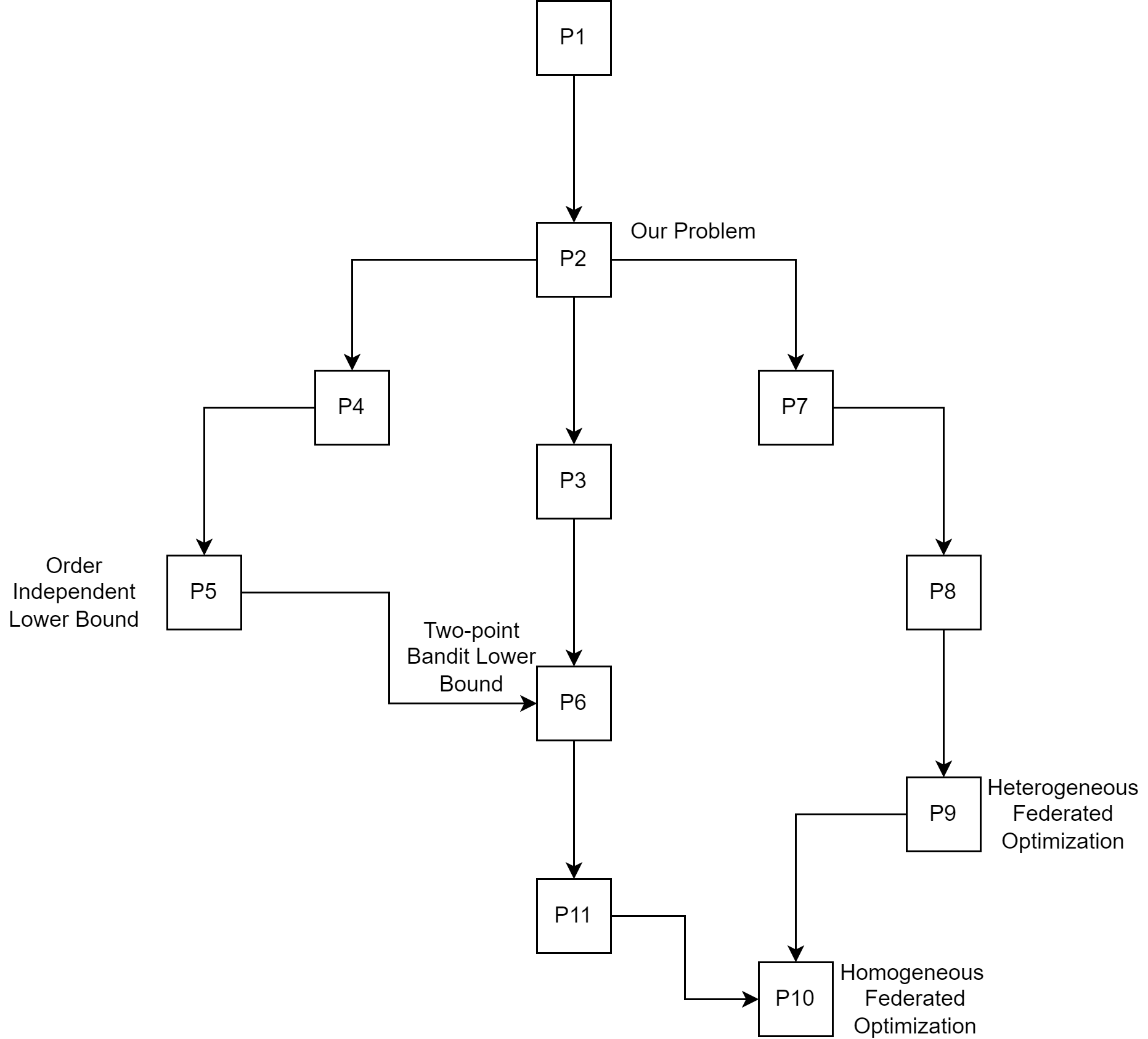}
    \caption{Summary of the problem space of federated online optimization. An arrow from the parent to child denotes that the child min-max problem is easier or has a lower min-max value than the parent problem. Note that to show that there is no benefit of collaboration for first-order algorithms for the problem (P2), we use the lower bound construction for the problem (P5). The figure clarifies why this does not contradict the benefit of collaboration for problems (P9) and (P10), as they lie on a different path from the parent (P2).}
    \label{fig:hierarchy}
\end{figure}
First, we recall that the algorithms on each machine can depend on all the history it has seen. Since we are in the IC setting, at time $t$ on machine $m$, the algorithm $A$'s output can only depend on $$\left(f_1^{1:M}, \ldots, f_{\tau(t)}^{1:M}, f_{\tau(t)+1}^{m},\ldots, f_{t-1}^{m}\right),$$ where $\tau(t)$ is the last time step smaller than or equal to $t$ where communication happened. We also assume that the algorithms can be randomized, i.e., at each time $t$ on machine $m$, the output can also depend on a randomly drawn seed $z_t^m$. With that said, the hardest adversarial problem we can hope to solve for functions coming from some problem class $\ppp\subset \fff^{\otimes MT}$ is
\begin{align}
    \min_{A\in\aaa} \ee_{A}\sb{\max_{P\in \ppp}  \frac{1}{MT}\sum_{t\in [T], m\in[M]}f_t^m(x_t^m) - \min_{x^\star\in \bb_2(B)}\frac{1}{MT}\sum_{t\in [T], m\in[M]}f_t^m(x^\star)}.\tag{\textbf{P1}}
\end{align}

For this problem, note that the max-player knows both the algorithm and the randomization of the min-player. Thus, the min-player doesn't gain from randomization at all. A simpler problem is the following, where the max-player knows the algorithm but not the random seeds
\begin{align}
    \min_{A\in\aaa}\max_{P\in \ppp}  \ee_{A}\sb{\frac{1}{MT}\sum_{t\in [T], m\in[M]}f_t^m(x_t^m) - \min_{x^\star\in \bb_2(B)}\frac{1}{MT}\sum_{t\in [T], m\in[M]}f_t^m(x^\star)}. \tag{\textbf{P2}}
\end{align}
Problem (P2) is usually considered when talking about randomized optimization algorithms. Note that randomization on the second player doesn't increase the min-max regret, so we can equivalently state (P2) with randomized max-players as follows
\begin{align}
    \min_{A\in\aaa}\max_{P\in \ppp}  \ee_{A, P}\sb{\frac{1}{MT}\sum_{t\in [T], m\in[M]}f_t^m(x_t^m) - \min_{x^\star\in \bb_2(B)}\frac{1}{MT}\sum_{t\in [T], m\in[M]}f_t^m(x^\star)}.\tag{\textbf{P2}}
\end{align}
However, making the randomization on the max-player explicit makes it easier to state the following easier version of the problem (P2) with a weaker comparator
\begin{align}
    \min_{A\in\aaa}\max_{P\in \ppp}  \ee_{A}\rb{\ee_P\sb{\frac{1}{MT}\sum_{t\in [T], m\in[M]}f_t^m(x_t^m)} - \min_{x^\star\in \bb_2(B)}\ee_P\sb{\frac{1}{MT}\sum_{t\in [T], m\in[M]}f_t^m(x^\star)}}. \tag{\textbf{P3}}
\end{align}
The comparator in problem (P3) does not know the randomization of the max-player. This form of regret is common in multi-armed bandit literature and is often called \textit{``pseudo-regret"}. In this paper, we upper bound (P2), thus also upper bounding (P3), which might be more relevant for the federated adversarial linear bandits problem. We have only discussed the \textit{``adaptive"} setting so far. We can also relax the problem (P2) by weakening the adversary. One way to do this is by requiring the functions to be the same across the machines, which leads to the following problem
\begin{align*}
    \min_{A\in\aaa}\max_{\{f_t\in\fff\}^{\otimes M}_{t\in[T]}\in \ppp}  \ee_{A}\sb{\frac{1}{MT}\sum_{t\in [T], m\in[M]}f_t(x_t^m) - \min_{x^\star\in \bb_2(B)}\frac{1}{T}\sum_{t\in [T]}f_t(x^\star)}.\tag{\textbf{P4}}
\end{align*}
Note that if the functions across the machines are shared, then $\zeta=0$ in Assumption \ref{ass:heterogeneity}. Depending on the algorithm class, this problem may or may not be equivalent to a fully serial problem, as we showed in this paper. We can further simplify this problem by making the adversary stochastic
\begin{align*}
    \min_{A\in\aaa}\max_{\ddd\in \Delta(\fff)}  \ee_{A, \{f_t\sim \ddd\}_{t\in[T]}}\sb{\frac{1}{MT}\sum_{t\in [T], m\in[M]}f_t(x_t^m) - \min_{x^\star\in \bb_2(B)}\frac{1}{T}\sum_{t\in [T]}f_t(x^\star)}. \tag{\textbf{P5}}
\end{align*}
We can also simplify problem (P5) to have a weaker comparator or consider problem (P3) with a stochastic adversary
\begin{align*}
    \min_{A\in\aaa}\max_{\ddd\in \Delta(\fff)}  \ee_{A, \{f_t\sim \ddd\}_{t\in[T]}}\sb{\frac{1}{MT}\sum_{t\in [T], m\in[M]}\ee_{f_t\sim \ddd}\sb{f_t(x_t^m)}}- \min_{x^\star\in \bb_2(B)}\ee_{f\sim \ddd}\sb{f(x^\star)}. \tag{\textbf{P6}}
\end{align*}
Recalling the definition of $F_m, F$ this can be re-written as,
\begin{align*}
    \min_{A\in\aaa}\max_{\ddd\in \Delta(\fff)}  \ee_{A, \{f_t\sim \ddd\}_{t\in[T]}}\sb{\frac{1}{MT}\sum_{t\in [T], m\in[M]}F(x_t^m)}- \min_{x^\star\in \bb_2(B)}F(x^\star). \tag{\textbf{P6}}
\end{align*}
Now let's relax (P2) directly to have stochastic adversaries, i.e., have fixed distributions on each machine. Note that this will require appropriately changing the problem classes' assumptions, as discussed in the remarks \ref{rem:heterogeneity} and \ref{rem:loss_optimal}.  To simplify the discussion, we assume that the problem class has no additional assumption and is just a selection of $MKR$ functions from some class $\fff$. With this simplification, we can now relax $(P2)$ by picking the functions at machine $m$ at time $t$ from the distribution $\ddd_m$. This leads to the following problem
\begin{align}
    \min_{A\in\aaa}\max_{\{\ddd_m\sim \Delta(\fff)\}_{m\in [M]}}  \ee_{A, \{f_t^m\sim \ddd_m\}_{t\in[T]}^{m\in [M]}}\sb{\frac{1}{MT}\sum_{t\in [T], m\in[M]}f_t^m(x_t^m) - \min_{x^\star\in \bb_2(B)}\frac{1}{MT}\sum_{t\in [T], m\in[M]}f_t^m(x^\star)}. \tag{\textbf{P7}}
\end{align}
If we weaken the comparator for this problem and recall the definitions for $\{F_m := \ee_{f\sim \ddd_m}[f]\}_{m\in[M]}$ and $F := \frac{1}{M}\sum_{m\in[M]}F_m$, we get the following problem
\begin{align}
    \min_{A\in\aaa}\max_{\{\ddd_m\sim \Delta(\fff)\}_{m\in [M]}}  \ee_{A, \{f_t^m\sim \ddd_m\}_{t\in[T]}^{m\in [M]}}\sb{\frac{1}{MT}\sum_{t\in [T], m\in[M]}F_m(x_t^m)} - \min_{x^\star\in \bb_2(B)}F(x^\star). \tag{\textbf{P8}}
\end{align}
We note that problem (P8) is the regret minimization version of the usual heterogeneous federated optimization problem \citep{mcmahan2016communication, woodworth2020minibatch}. To make the final connection to the usual federated optimization literature, we note that the problem becomes easier if the algorithm can look at all the functions before deciding which model to choose. In other words, minimizing regret in the online setting is harder than creating one final retrospective model. This means we can simplify the problem (P8) to the following problem, where $A$ outputs $\hat{x}$ after looking at all the functions
\begin{align}
    \min_{A\in\aaa}\max_{\{\ddd_m\sim \Delta(\fff)\}_{m\in [M]}}  \ee_{A, \{f_t^m\sim \ddd_m\}_{t\in[T]}^{m\in [M]}}\sb{\frac{1}{MT}\sum_{t\in [T], m\in[M]}F_m(\hat{x})} - \min_{x^\star\in \bb_2(B)}F(x^\star). \tag{\textbf{P9}}
\end{align}
With some re-writing of the notation, this reduces the usual heterogeneous federated optimization problem \citep{mcmahan2016communication, woodworth2020minibatch}
\begin{align}
    \min_{A\in\aaa}\max_{\{\ddd_m\sim \Delta(\fff)\}_{m\in [M]}}  \ee_{A, \{f_t^m\sim \ddd_m\}_{t\in[T]}^{m\in [M]}}\sb{F(\hat{x})} - \min_{x^\star\in \bb_2(B)}F(x^\star). \tag{\textbf{P9}}
\end{align}
Assuming $\ddd_m= \ddd$ for all $m\in[M]$ this reduces to the usual homogeneous federated optimization problem \citep{woodworth2020local}
\begin{align*}
    \min_{A\in\aaa}\max_{\ddd\in \Delta(\fff)}  \ee_{A, \{f_t^m\sim \ddd\}^{m\in[M]}_{t\in[T]}}\sb{F(\hat{x})}- \min_{x^\star\in \bb_2(B)}F(x^\star). \tag{\textbf{P10}}
\end{align*}
Note that we can get a similar relaxation of the problem (P6) by converting regret minimization to find a final good solution. The problem will look as follows
\begin{align*}
    \min_{A\in\aaa}\max_{\ddd\in \Delta(\fff)}  \ee_{A, \{f_t\sim \ddd\}_{t\in[T]}}\sb{F(\hat{x})}- \min_{x^\star\in \bb_2(B)}F(x^\star). \tag{\textbf{P11}}
\end{align*}
The key difference between (P10) and (P11) is that $\hat{x}$ depends on $MT$ v/s $T$ samples respectively each case. This means (P10) is simpler than (P11). This concludes the discussion, and we summarize the comparisons between different min-max problems in Figure \ref{fig:hierarchy}. Next, we discuss two relevant lower bounds that follow from this understanding of problem hierarchies. 

\section{Relevant Lower Bound Proofs}\label{sec:lower_bounds}

We would like to understand the lower bounds for the problem (P2) from the previous section, i.e., this paper's main quantity of interest for relevant problem and algorithm classes. Note that to lower bound (P2), we can lower bound any children problems in Figure \ref{fig:hierarchy}. We first restate a well-known OCO sample complexity lower bound to show that 
\begin{align}
  \rrr^1(\ppp_{M,K,R}(\fff^{G, B}, \zeta)) \cong \rrr^1(\ppp_{M,K,R}(\fff^{G, B}_{lin}, \zeta)) \cong \frac{GB}{\sqrt{KR}}. \label{eq:order_independent_lb_lip}
\end{align}

To see this, we recall that $(P2)\succeq (P4) \succeq (P5)$ and then note that for the adversary in $(P5)$, $\zeta=0$ by design as all the machines see the same function. Thus to prove a lower bound, it is sufficient to specify a distribution $\ddd\in \Delta(\fff^{G,B}_{lin})\subset \Delta(\fff^{G,B})$ such that for \textbf{any} sequence of models $\{x_t^m\}_{t\in[T]}^{m\in[M]}$, 
$$\ee_{\{f_t\}_{t\in[T]}}\sb{\frac{1}{MT}\sum_{m,t}f_t(x_t^m) - \min_{x^\star\in \bb_2(B)}\frac{1}{T}\sum_{t}f_t(x^\star)}\succeq \frac{GB}{\sqrt{T}}.$$
And one such easy construction is choosing $f_t(x) = \inner{v_t}{x}$ where $v_t\sim \frac{G}{\sqrt{d}}\cdot Unif(\{+1, -1\}^d)$. This ensures that the first term on the L.H.S. in the above inequality is zero after taking the expectation no matter which model is picked. The minimizer of the second quantity is $x^\star$ is $-\frac{GB}{\sqrt{T}}$ (c.f., Theorem 3.2 \citep{hazan2016introduction}), which gives the lower bound in equation \eqref{eq:order_independent_lb_lip}. Note that the construction is linear and lies in the class $\fff^{G,B}$, which is an order-optimal lower bound since it doesn't have anything to do with the form of the algorithm. Furthermore, to get equality in equation \eqref{eq:order_independent_lb_lip}, we recall the non-collaborative OGD upper bound (c.f., Theorem 3.1 \citep{hazan2016introduction}). 

Next, we draw our attention to the problem class of smooth functions $\ppp_{M,K,R}(\fff^{H,B}, \zeta, F_\star)$ where we show that 
\begin{align}
\rrr^1(\ppp_{M,K,R}(\fff^{H,B}, \zeta, F_\star)) \cong \frac{HB^2}{T} + \frac{\sqrt{HF_\star}B}{\sqrt{T}}. \label{eq:order_independent_lb_smth}
\end{align}
We use the same lower-bounding strategy mentioned above but instead lower bound (P10), and note that $(P2)\succeq (P4) \succeq (P5) \succeq (P6) \succeq (P10)$. This already makes $\zeta=0$ since our attack is coordinated. To lower bound (P10), we use the construction and distribution as used in the proof of Theorem 4 by \citet{woodworth2021even}, which is a sample complexity lower bound that only depends on $T$, i.e., the number of samples observed from $\ddd$. For the upper bound, we use Theorem 3 by \citet{srebro2010optimistic} proved in the same setting. Even though the upper bounds are first-order algorithms, the lower bounds are order-independent, i.e., sample complexity lower bounds. 

Finally, we'd like to prove a lower bound for the problem class $\ppp_{MKR}(\fff^{G,B})$ with two-point bandit feedback. In particular, we want to show that
\begin{align}
\rrr^{0,2}(\ppp_{MKR}(\fff^{G,B}), \zeta) \succeq \frac{GB}{\sqrt{KR}} + \frac{GB\sqrt{d}}{\sqrt{MKR}}.
\end{align}
To prove this, we'd use the reduction $(P2)\succeq (P4) \succeq (P5) \succeq (P6)$. Then we note for the problem (P6), $\zeta=0$, and using $2$-point feedback; we get in total $2MKR$ function value accesses to $\ddd$. We can use the lower bound in Proposition 2 by \citet{duchi2015optimal} for the problem (P6) for $2M$ points of feedback and $KR$ iterations. Combined with the order-independent lower bound proved previously, this proves the required result.

\section{Proof of Theorem \ref{thm:favlb}}
In this section, we provide the proofs of Theorem \ref{thm:favlb}. We first introduce several notations, which will be used in our analysis.
Let $d(x,y)=\|x\|_2^2/2-\|\hat y\|_2^2/2-\langle y,x-\hat y\rangle$, where $\|x\|_2\leq B$ and $\hat y$ is the projected point of $y$ in to the $\ell_2$-norm ball with radius $B$. We have the following holds
\begin{align}\label{eq:dis_ineq}
    d(x,y)\geq \frac{1}{2}\|x-\hat y\|_2^2.
\end{align}
This is due to the following: if $\|y\|\leq B$, \eqref{eq:dis_ineq} clearly holds. If $\|y\|>B$, we have 
\begin{align*}
    d(x,y)-\frac{1}{2}\|x-\hat y\|_2^2=\langle x-\hat y,\hat y-y\rangle\geq \langle \hat y-\hat y,\hat y-y\rangle=0,
\end{align*}
where the inequality is due to the fact that $\hat y-y=(1-\|y\|_2/B)\hat y$ lies in the opposite direction of $\hat y$, and $x=\hat y$ will minimize the inner product. Now, we are ready to prove the regret of Algorithm \ref{alg:fed_pogd}. 
\begin{proof}
Define the following notations
\begin{align*}
    \bar x_{t}=\frac{1}{M}\sum_{m=1}^Mx_t^m,~~\bar w_t=\text{Proj}(\bar x_{t}),~~w_t^m=\text{Proj}(x_t^m).
\end{align*}
We have
\begin{align}\label{dis_lya}
    d(x^\star,\bar x_{t+1})&=\frac{1}{2}\|x^\star\|_2^2-\frac{1}{2}\|\bar w_{t+1}\|_2^2-\langle \bar x_{t+1},x^\star-\bar w_{t+1}\rangle\nonumber\\
    &=\frac{1}{2}\|x^\star\|_2^2-\frac{1}{2}\|\bar w_{t+1}\|_2^2-\langle \bar x_{t}-\eta\frac{1}{M}\sum_{m=1}^Mg_t^m,x^\star-\bar w_{t+1}\rangle\nonumber\\
    &=\underbrace{\frac{1}{2}\|x^\star\|_2^2-\frac{1}{2}\|\bar w_{t+1}\|_2^2-\langle \bar x_{t},x^\star-\bar w_{t+1}\rangle}_{I_1}\nonumber\\
    &\qquad\underbrace{-\eta\frac{1}{M}\sum_{m=1}^M\langle g_t^m,\bar w_{t+1}-x^\star\rangle}_{I_2},
\end{align}
where the second equality comes from the updating rule of Algorithm \ref{alg:fed_pogd}. For the term $I_1$, we have
\begin{align*}
    I_1&=\frac{1}{2}\|x^\star\|_2^2-\frac{1}{2}\|\bar w_{t+1}\|_2^2-\langle \bar x_{t},x^\star-\bar w_{t+1}\rangle\\
    &=\frac{1}{2}\|x^\star\|_2^2-\frac{1}{2}\|\bar w_{t}\|_2^2-\langle \bar x_{t},x^\star-\bar w_{t}\rangle-\langle \bar x_{t},\bar w_{t}-\bar w_{t+1}\rangle\\
    &\qquad-\frac{1}{2}\|\bar w_{t+1}\|_2^2+\frac{1}{2}\|\bar w_{t}\|_2^2\\
    &=d(x^\star,\bar x_t)-d(\bar w_{t+1},\bar x_t)\\
    &\leq d(x^\star,\bar x_t)-\frac{1}{2}\|\bar w_{t+1}-\bar w_{t}\|_2^2,
\end{align*}
where the last inequality is due to \eqref{eq:dis_ineq}. For the term $I_2$, we have
\begin{align*}
    I_2&=-\eta\frac{1}{M}\sum_{m=1}^M\langle g_t^m,\bar w_{t+1}-x^\star\rangle\\
    &=\underbrace{-\eta\frac{1}{M}\sum_{m=1}^M\langle g_t^m-\nabla f_t^m(w_t^m),\bar w_{t+1}-x^\star\rangle}_{I_{21}}\underbrace{-\eta\frac{1}{M}\sum_{m=1}^M\langle \nabla f_t^m(w_t^m),\bar w_{t+1}-\bar x^\star\rangle}_{I_{22}}.
\end{align*}
For the term $I_{21}$, we have
\begin{align*}
    \ee_t[I_{21}]&=\eta \ee_t\frac{1}{M}\sum_{m=1}^M\langle \nabla f_t^m(w_t^m)-g_t^m,\bar w_{t+1}-x^\star\rangle\\
    &=\eta \ee_t\frac{1}{M}\sum_{m=1}^M\langle \nabla f_t^m(w_t^m)-g_t^m,\bar w_{t+1}-\bar w_t\rangle\\
    &\leq \eta \ee_t\bigg\|\frac{1}{M}\sum_{m=1}^M(\nabla f_t^m(w_t^m)-g_t^m)\bigg\|_2\cdot\|\bar w_{t+1}-\bar w_t\|_2\\
    &\leq \eta\frac{\sigma}{\sqrt{M}}\ee_t\|\bar w_{t+1}-\bar w_t\|_2.
\end{align*}
Thus we have
\begin{align*}
    \ee[I_{21}]\leq \eta\frac{\sigma}{\sqrt{M}}\ee\|\bar w_{t+1}-\bar w_t\|_2.
\end{align*}
For the term $I_{22}$, we have 
\begin{align*}
    I_{22}&=-\eta\frac{1}{M}\sum_{m=1}^M\langle \nabla f_t^m(w_t^m),w_{t}^m-\bar x^\star\rangle-\eta\frac{1}{M}\sum_{m=1}^M\langle \nabla f_t^m(w_t^m),\bar w_{t}-w_{t}^m\rangle\\
    &\qquad-\eta\frac{1}{M}\sum_{m=1}^M\langle \nabla f_t^m(w_t^m),\bar w_{t+1}-\bar w_{t}\rangle\\
    &\leq -\eta\frac{1}{M}\sum_{m=1}^M(f_t^m(w_t^m)-f_t^m(x^\star)+\eta\frac{1}{M}\sum_{m=1}^M\|\nabla f_t^m(w_t^m)\|_2\cdot\|\bar w_t-w_t^m\|_2\\
    &\qquad+\eta^2
    \bigg\|\frac{1}{M}\sum_{m=1}^M\nabla f_t^m(w_t^m)\bigg\|_2^2+\frac{1}{4}\|\bar w_{t+1}-\bar w_{t}\|_2^2
\end{align*}
Therefore, combining \eqref{dis_lya} and the upper bound of $I_1$ and $I_2$, we have
\begin{align*}
\ee  d(x^\star,\bar x_{t+1})&\leq \ee  d(x^\star,\bar x_{t})-\frac{1}{4}\ee \|\bar w_{t+1}-\bar w_{t}\|_2^2+\eta\frac{\sigma}{\sqrt{M}}\ee\|\bar w_{t+1}-\bar w_t\|_2\\
&\qquad-\eta\frac{1}{M}\sum_{m=1}^M\ee(f_t^m(w_t^m)-f_t^m(x^\star)+\eta\frac{1}{M}\sum_{m=1}^M\ee\|\nabla f_t^m(w_t^m)\|_2\cdot\|\bar w_t-w_t^m\|_2\\
&\qquad+\eta^2
    \ee\bigg\|\frac{1}{M}\sum_{m=1}^M\nabla f_t^m(w_t^m)\bigg\|_2^2.
\end{align*}
Therefore, we have
\begin{align*}
    \eta\frac{1}{M}\sum_{m=1}^M\ee(f_t^m(w_t^m)-f_t^m(x^\star)&\leq \ee  d(x^\star,\bar x_{t})-\ee  d(x^\star,\bar x_{t+1})+\eta^2\frac{\sigma^2}{M}\\
    &\qquad+\eta\frac{1}{M}\sum_{m=1}^M\ee\|\nabla f_t^m(w_t^m)\|_2\cdot\|\bar w_t-w_t^m\|_2\\
&\qquad+\eta^2
    \ee\bigg\|\frac{1}{M}\sum_{m=1}^M\nabla f_t^m(w_t^m)\bigg\|_2^2.
\end{align*}
In addition, we have 
\begin{align*}
    \frac{1}{M}\sum_{m=1}^M\ee\|\bar w_t-w_t^m\|_2\leq\frac{1}{M}\sum_{m=1}^M \ee\|\bar x_t-x_t^m\|_2\leq 2\eta (\sigma\sqrt{K} + \zeta K).,
\end{align*}
where the last inequality is due to the linear function and follows the similar proofs of Lemma 8 in \citet{woodworth2020minibatch}. Thus, we  can obtain (the indicator function comes from the fact that if $K=1$, there would be no consensus error)
    \begin{align*}
        \frac{1}{M}\sum_{m\in[M]}\ee\sb{f_t^m(w_t^m) - f_t^m(x^\star}
        &\leq \frac{1}{\eta}\rb{\ee  d(x^\star,\bar x_{t})-\ee  d(x^\star,\bar x_{t+1})} \\
        &\qquad+ \eta\rb{G^2 + \frac{\sigma^2}{M}} + \mathbbm{1}_{K>1}\cdot 2G(\sigma\sqrt{K} + \zeta K)\eta.
    \end{align*}
Since $\ee  d(x^\star,\bar x_{T})\geq \ee \|x^\star-\bar w_T\|_2^2/2\geq 0$ and $\ee  d(x^\star,\bar x_{0})=\|x^\star\|_2^2/2$, summing the above inequality over $t$, we can get
\begin{align*}
    \frac{1}{M}\sum_{t\in [KR], m\in[M]}\ee\sb{f_t^m(w_t^m)-f_t^m(x^\star)} \preceq\frac{B^2}{\eta} +\eta\rb{G^2 + \frac{\sigma^2}{M} + \mathbbm{1}_{K>1}\cdot G(\sigma\sqrt{K} + \zeta K)}T.
\end{align*}
If we choose $\eta$ such that
$$\eta = \frac{B}{G\sqrt{T}}\cdot\min\bigg\{1,\frac{G\sqrt{M}}{\sigma}, \frac{\sqrt{G}}{\mathbbm{1}_{K>1}\sqrt{\sigma }K^{1/4}}, \frac{\sqrt{G}}{\mathbbm{1}_{K>1}\sqrt{\zeta K}}\bigg\},$$

We can get
\begin{align*}
   \frac{1}{MKR}\sum_{t\in [KR], m\in[M]}\ee\sb{f_t^m(w_t^m)-f_t^m(x^\star)} \preceq  
        \frac{GB}{\sqrt{KR}} + \frac{\sigma B}{\sqrt{MKR}} + \mathbbm{1}_{K>1}\cdot \rb{\frac{\sqrt{G\sigma}B}{K^{1/4}\sqrt{R}} + \frac{\sqrt{G\zeta}B}{\sqrt{R}}}.
\end{align*}
To get the regret, we just need to notice that we have the linear function, and thus we have: the smoothed function $\hat f=f$ and $\ee f_t^m(w_t^m)=\ee f_t^m(w_t^m+\delta u_t^m)$, where the expectation is over $u_t^m$. Furthermore, $\|g_t^m\|_2^2=d^2\big(f_t^m(w_t^m+\delta u_t^m)\big)^2\leq d^2G^2(B+\delta)^2/\delta^2\leq 4d^2G^2$, where the last inequality is due the choice of $\delta=B$. Since $\ee g_t^m=\nabla \hat f_t^m(w_t^m)$ and $\ee\|g_t^m-\nabla \hat f_t^m(w_t^m)\|_2^2\leq \ee\|g_t^m\|_2^2$ Therefore, we can plug in $\sigma^2=4d^2G^2$ to get our regret.
\end{proof}
\section{Proof of Theorem \ref{thm:bd_grad_first_stoch}}\label{app:bdg}
In this section and the next one, we consider access to a first-order stochastic oracle as an intermediate step before considering the zeroth-order oracle. Specifically, each machine has access to a \textbf{stochastic gradient} $g_t^m$ of $f_t^m$ at point $x_t^m$, such that it is unbiased and has bounded variance, i.e., for all $x\in\xxx$, $$\ee[g_t^m(x_t^m)|x_t^m]=\nabla f_t^m(x_t^m) \text{ and } \ee\sb{\norm{g_t^m(x_t^m) - \nabla f_t^m(x_t^m)}^2|x_t^m} \leq \sigma^2.$$ 
In Algorithm \ref{alg:fed_ogd}, we constructed a particular stochastic gradient estimator at $x_t^m$ with $\sigma^2=G^2d$. We can define the corresponding problem class $\ppp_{M, K, R}^{1, \sigma}(\fff^{G, B}, \zeta)$ when the agents can access a stochastic first-order oracle. We have the following lemma about this problem class:
\begin{lemma}\label{lemma:bd_grad_first_stoch}
Consider the problem class $\ppp_{M,K,R}^{1, \sigma}(\fff^{G,B}, \zeta)$. If we choose $\eta = \frac{B}{G\sqrt{T}}\cdot\min\bigg\{1,\frac{G\sqrt{M}}{\sigma}$, $\frac{\sqrt{G}}{\mathbbm{1}_{K>1}\sqrt{\sigma K}}, \frac{1}{\mathbbm{1}_{K>1}\sqrt{K}}\bigg\}$, then the models $\{x_t^m\}_{t,m=1}^{T,M}$ of Algorithm \ref{alg:fed_ogd} satisfy the following guarantee:
    $$\frac{1}{MKR}\sum_{t\in [KR], m\in[M]}\ee\sb{f_t^m(x_t^m)-f_t^m(x^\star)} \preceq  
        \frac{GB}{\sqrt{KR}} + \frac{\sigma B}{\sqrt{MKR}} + \mathbbm{1}_{K>1}\cdot \rb{\frac{\sqrt{\sigma G}B}{\sqrt{R}} + \frac{GB}{\sqrt{R}}},$$
    where $x^\star \in \arg\min_{x\in \rr^d}\sum_{t\in [KR]}f_t(x)$, and the expectation is w.r.t. the stochastic gradients.    
\end{lemma}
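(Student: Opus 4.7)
My plan is to run a standard Local SGD analysis on the \emph{virtual averaged iterate} $\bar{x}_t := \frac{1}{M}\sum_{m\in[M]} x_t^m$. Since the local updates use the same step size and only the communication step averages the iterates, one checks that $\bar{x}_{t+1} = \bar{x}_t - \eta \bar{g}_t$ where $\bar{g}_t := \frac{1}{M}\sum_m g_t^m$, regardless of whether $t$ is a communication step. A one-step descent expansion of $\|\bar{x}_{t+1} - x^\star\|^2$, together with $\ee[\bar{g}_t \mid \text{history}] = \frac{1}{M}\sum_m \nabla f_t^m(x_t^m)$ and $\ee\|\bar{g}_t - \ee \bar{g}_t\|^2 \le \sigma^2/M$, yields after telescoping:
\begin{align*}
\sum_t \ee\big\langle \bar{g}_t,\, \bar{x}_t - x^\star\big\rangle
\;\le\; \frac{B^2}{2\eta} + \frac{\eta T}{2}\Big(\tfrac{\sigma^2}{M} + G^2\Big),
\end{align*}
where the $G^2$ term comes from bounding $\|\ee \bar{g}_t\|^2 \le G^2$ via Lipschitzness.

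The next step is to relate the quantity actually appearing in the regret, $\frac{1}{M}\sum_m(f_t^m(x_t^m) - f_t^m(x^\star))$, to the inner product above. By convexity and the split $x_t^m - x^\star = (x_t^m - \bar{x}_t) + (\bar{x}_t - x^\star)$,
\begin{align*}
\frac{1}{M}\sum_m (f_t^m(x_t^m) - f_t^m(x^\star))
\;\le\; \Big\langle \tfrac{1}{M}\!\sum_m \nabla f_t^m(x_t^m),\, \bar{x}_t - x^\star\Big\rangle + \frac{G}{M}\sum_m \|x_t^m - \bar{x}_t\|,
\end{align*}
so that summing over $t$, taking expectations and using Jensen's inequality reduces the proof to bounding the \emph{consensus error} $\Phi_t := \frac{1}{M}\sum_m \ee\|x_t^m - \bar{x}_t\|^2$.

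The consensus error bound is the main technical obstacle. Writing $x_t^m - \bar{x}_t = -\eta \sum_{s=\tau(t)+1}^{t-1}\big(g_s^m - \bar{g}_s\big)$ and decomposing each summand into (i) stochastic noise (mean zero, variance $\le \sigma^2$), (ii) noise in the average (variance $\le \sigma^2/M$), and (iii) the bias $\nabla f_s^m(x_s^m) - \frac{1}{M}\sum_{m'} \nabla f_s^{m'}(x_s^{m'})$ (bounded in norm by $2G$ by Lipschitzness), standard martingale/triangle inequalities give
\begin{align*}
\Phi_t \;\preceq\; \eta^2\, K\, \sigma^2 \;+\; \eta^2\, K^2\, G^2 \qquad (K > 1),
\end{align*}
and $\Phi_t = 0$ when $K=1$. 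The Lipschitz bound $\zeta \le 2G$ is what makes this clean; tracking heterogeneity more carefully is unnecessary here since the final bound doesn't improve with $\zeta$.

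Combining the three ingredients gives the per-round bound
\begin{align*}
\frac{1}{MT}\sum_{t,m} \ee[f_t^m(x_t^m)-f_t^m(x^\star)]
\;\preceq\; \frac{B^2}{\eta T} + \eta\Big(\tfrac{\sigma^2}{M} + G^2\Big) + \mathbbm{1}_{K>1}\cdot\eta G\big(\sqrt{K}\sigma + K G\big),
\end{align*}
and the stated $\eta$ is precisely the minimum that balances each of the four summands ($\eta \propto 1/\sqrt{T}$ balances the first two dominant-regime terms, $\eta \propto \sqrt{M}/\sigma$ balances the $\sigma^2/M$ term, $\eta \propto 1/\sqrt{\sigma K}$ balances the $\sqrt{\sigma G}B/\sqrt{R}$ term, and $\eta \propto 1/\sqrt{K}$ controls the $KG^2$ term to yield $GB/\sqrt{R}$). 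Plugging in and simplifying with $T=KR$ produces the four terms of the lemma.
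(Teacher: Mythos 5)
Your proposal is correct and follows essentially the same route as the paper's proof: the ghost averaged iterate $\bar{x}_t$, a one-step descent expansion with the variance reduced to $\sigma^2/M$, convexity plus Lipschitzness to reduce everything to the consensus error, the standard Local-SGD consensus bound (the paper simply cites Lemma 8 of Woodworth et al.\ rather than re-deriving it), and the same step-size balancing. The only cosmetic difference is that you track the squared consensus error and apply Jensen, obtaining the slightly tighter $\eta(\sigma\sqrt{K}+GK)$ in place of the paper's $2\eta(\sigma+G)K$, which still yields the stated bound.
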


\begin{remark}
Note that when $K=1$, the upper bound in Lemma \ref{lemma:bd_grad_first_stoch} reduces to the first two terms, both of which are known to be optimal due to lower bounds in the stochastic setting, i.e., against a stochastic online adversary \cite{nemirovski1994efficient, hazan2016introduction}. We now use this lemma to guarantee bandit two-point feedback oracles for the same function class. We recall that one can obtain a stochastic gradient for a \textit{``smoothed-version"} $\hat{f}$ of a Lipschitz function $f$ at any point $x\in\xxx$, using two function value calls to $f$ around the point $x$ \cite{shamir2017optimal, duchi2015optimal}.     
\end{remark}
With this lemma, we can prove Theorem \ref{thm:bd_grad_first_stoch}.
\begin{proof}[Proof of Theorem \ref{thm:bd_grad_first_stoch}]
    First, we consider smoothed functions $$\hat{f}_t^m(x) := \ee_{u\sim Unif(S_{d-1})}[f_t^m(x + \delta u)],$$ for some $\delta>0$ and $S_{d-1}$ denoting the euclidean unit sphere. Based on the gradient estimator proposed by \citet{shamir2017optimal} (which can be implemented with two-point bandit feedback) and Lemma \ref{lemma:bd_grad_first_stoch}, we can get the following regret guarantee (noting that $\sigma \leq  c_1\sqrt{d}G$ for a numerical constant $c_1$, c.f., \cite{shamir2017optimal}): 
    $$\ee\sb{\frac{1}{MKR}\sum_{t\in[KR], m\in[M]}\hat{f}_t^m(\hat{x}_t^m)} - \frac{1}{MKR}\sum_{t\in[KR], m\in[M]}\hat{f}_t^m(x^\star) \preceq \frac{GB}{\sqrt{KR}} + \frac{GB\sqrt{d}}{\sqrt{MKR}} + \mathbbm{1}_{K>1}\cdot\frac{GBd^{1/4}}{\sqrt{R}},$$
    where the expectation is w.r.t. the stochasticity in the stochastic gradient estimator. To transform this into a regret guarantee for $f$ we need to account for two things: 
    \begin{enumerate}
        \item The difference between the smoothed function $\hat{f}$ and the original function $f$. This is easy to handle because both these functions are pointwise close, i.e., $\sup_{x \in \xxx}|f(x)-\hat{f}(x)|\leq G\delta$.
        \item The difference between the points $\hat{x}_t^m$ at which the stochastic gradient is computed for $\hat{f}_t^m$ and the actual points $x_t^{m,1}$ and $x_t^{m,2}$ on which we incur regret while making zeroth-order queries to $f_t^m$. This is also easy to handle because due to the definition of the estimator, $x_t^{m,1}, x_t^{m,1}\in B_\delta(\hat{x}_t^m)$, where $B_\delta(x)$ is the $L_2$ ball of radius $\delta$ around $x$.
    \end{enumerate}
    In light of the last two observations, the average regret between the smoothed and original functions only differs by a factor of $2G\delta$, i.e.,
    \begin{align*}
      &\ee\sb{\frac{1}{2MKR}\sum_{t\in[KR], m\in[M], j\in[2]}f_t^m(x_t^{m,j})} - \frac{1}{MKR}\sum_{t\in[KR], m\in[M]}f_t^m(x^\star)\\
      &\qquad\qquad\preceq G\delta + \frac{GB}{\sqrt{KR}} + \frac{GB\sqrt{d}}{\sqrt{MKR}} + \mathbbm{1}_{K>1}\cdot\frac{GBd^{1/4}}{\sqrt{R}}\\
      &\qquad\qquad\preceq \frac{GB}{\sqrt{KR}} + \frac{GB\sqrt{d}}{\sqrt{MKR}} + \mathbbm{1}_{K>1}\cdot\frac{GBd^{1/4}}{\sqrt{R}},
    \end{align*}
    where the last inequality is due to the choice of $\delta$ such that $\delta \preceq \frac{Bd^{1/4}}{\sqrt{R}}\rb{1+ \frac{d^{1/4}}{\sqrt{MK}}}$.
\end{proof}

\section{Proof of Theorem \ref{thm:smooth_first_stoch}}\label{app:smth}
Similar to before, we start by looking at $\ppp_{M,K,R}^{1, \sigma}(\fff^{G,H, B}, \zeta, F_\star)$. We have the following lemma.
\begin{lemma}\label{lem:smooth_first_stoch}
Consider the problem class $\ppp_{M,K,R}^{1, \sigma}(\fff^{G,H, B}, \zeta, F_\star)$.  The models $\{x_t^{m}\}_{t,m=1}^{T,M}$ of Algorithm \ref{alg:fed_ogd} with appropriate $\eta$ (specified in the proof) satisfy the following regret guarantee:
    \begin{align*}
      \frac{1}{MKR}\sum_{t\in [KR], m\in[M]}\ee\sb{f_t^m(x_t^m)-f_t^m(x^\star)} &\preceq \frac{HB^2}{KR} + \frac{\sigma B}{\sqrt{MKR}} + \min\cb{\frac{GB}{\sqrt{KR}}, \frac{\sqrt{HF_\star}B}{\sqrt{KR}}},\\
        & +\mathbbm{1}_{K>1}\cdot\min\Bigg\{\frac{H^{1/3}B^{4/3}\sigma^{2/3}}{K^{1/3}R^{2/3}} + \frac{H^{1/3}B^{4/3}\zeta^{2/3}}{R^{2/3}} + \frac{\sqrt{\zeta\sigma}B}{K^{1/4}\sqrt{R}} + \frac{\zeta B}{\sqrt{R}},\\ 
        & \frac{\sqrt{G\sigma}B}{K^{1/4}\sqrt{R}} + \frac{\sqrt{G\zeta}B}{\sqrt{R}}\Bigg\},  
    \end{align*}
    where $x^\star \in \arg\min_{x\in \rr^d}\sum_{t\in [KR]}f_t(x)$, and the expectation is w.r.t. the stochastic gradients. The models also satisfy the guarantee of Lemma \ref{lemma:bd_grad_first_stoch} with the same step-size.    
\end{lemma}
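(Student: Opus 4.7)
The plan is to run a perturbed iterate analysis on the virtual averaged iterate $\bar{x}_t := \frac{1}{M}\sum_m x_t^m$, which (because the local updates in Algorithm \ref{alg:fed_ogd} are linear) evolves as $\bar{x}_{t+1} = \bar{x}_t - \eta\bar{g}_t$ with $\bar{g}_t = \frac{1}{M}\sum_m g_t^m$. Expanding $\|\bar{x}_{t+1}-x^\star\|^2$ and taking conditional expectation yields the one-step inequality
\begin{align*}
\ee\|\bar{x}_{t+1}-x^\star\|^2 \;\le\; \ee\|\bar{x}_t-x^\star\|^2 - 2\eta\,\ee\Big[\tfrac{1}{M}\!\sum_m\!\inner{\nabla f_t^m(x_t^m)}{x_t^m-x^\star}\Big] + \eta^2\,\ee\|\bar{g}_t\|^2 + 2\eta\,G\cdot\tfrac{1}{M}\!\sum_m\!\ee\|x_t^m-\bar{x}_t\|.
\end{align*}
Using convexity I replace the inner product by $f_t^m(x_t^m)-f_t^m(x^\star)$, thereby producing the average regret on the left after telescoping. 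For the stochastic-gradient norm I split $\ee\|\bar{g}_t\|^2 \le \sigma^2/M + \ee\|\tfrac{1}{M}\sum_m \nabla f_t^m(x_t^m)\|^2$ and, following \citet{srebro2010optimistic}, use the $H$-smoothness identity $\|\nabla f_t^m(x_t^m)\|^2 \le 4H(f_t^m(x_t^m)-f_t^m(x^\star)) + 2\|\nabla f_t^m(x^\star)\|^2$. The last term, summed over $t$, is controlled by $HTF_\star$ via Assumption \ref{ass:loss_optimal}, producing the $\sqrt{HF_\star}B/\sqrt{KR}$ optimistic contribution and the $HB^2/KR$ term; the $\min\{GB/\sqrt{KR},\sqrt{HF_\star}B/\sqrt{KR}\}$ comes from either this route or from a cruder $\|\nabla f_t^m(x_t^m)\|\le G$ bound, whichever is smaller.

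The heart of the argument is bounding the consensus error $\ee\|x_t^m-\bar{x}_t\|^p$ for $p\in\{1,2\}$ inside a block of $K$ local steps, where the drift is driven only by the stochastic and heterogeneity noise and (on smooth functions) can be contracted using the smoothness constant. For $K>1$ I would prove two bounds in parallel, in the style of \citet{woodworth2020minibatch} and \citet{patel2022towards}: a Lipschitz-only bound of the form $\eta^2(\sigma^2 K + \zeta^2 K^2) + \eta\sqrt{K}(\sigma+\zeta\sqrt{K})$-type terms, and a smoothness-aided bound that replaces one $K$ by an $H$-dependent factor. Plugging the first into the descent inequality and optimizing $\eta$ yields the second branch of the $\min$ (the $\sqrt{G\sigma}B/(K^{1/4}\sqrt R)+\sqrt{G\zeta}B/\sqrt R$ terms), while plugging the smoothness-aided bound and balancing yields the $H^{1/3}B^{4/3}(\sigma K^{-1}+\zeta)^{2/3}R^{-2/3}$ terms. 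Taking the $\min$ over the two analyses gives the stated bound.

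Finally, I telescope from $t=0$ to $KR-1$ using $\ee\|\bar{x}_0-x^\star\|^2\le B^2$, divide by $\eta MKR$, and choose the step size
\begin{align*}
\eta \;=\; \min\!\Big\{\tfrac{1}{2H},\ \tfrac{B}{\sigma}\sqrt{\tfrac{M}{KR}},\ \tfrac{B}{\sqrt{HF_\star KR}},\ \eta_{\mathrm{cons}}\Big\},
\end{align*}
where $\eta_{\mathrm{cons}}$ is the value that balances the winning consensus-error branch against the dominant $B^2/(\eta KR)$ term. The same analysis applied with the cruder $\|\nabla f\|\le G$ bound and consensus estimate recovers the rate of Lemma \ref{lemma:bd_grad_first_stoch} verbatim (last sentence of the statement). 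The main obstacle is the bookkeeping of the consensus-error inequality on smooth heterogeneous functions: one must recursively bound $\ee\|x_t^m-\bar{x}_{\tau(t)}\|^2$ inside each round while keeping the coefficient of $\ee\|\nabla f_t^m(x_t^m)\|^2$ small enough that it can be absorbed by the $2\eta(1-2\eta H)$-times-regret term coming from the descent inequality; handling this absorption cleanly is what forces the $\eta\le 1/(2H)$ constraint and drives the specific exponents $1/3$ and $2/3$ in the final smoothness-based terms.
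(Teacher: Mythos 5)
Your proposal follows essentially the same route as the paper's proof: a ghost-iterate expansion of $\|\bar{x}_{t+1}-x^\star\|^2$, convexity to extract the regret, the self-bounding property of smooth functions for the averaged-gradient-norm term (yielding the optimistic $\sqrt{HF_\star}$ rate), and two parallel bounds on the consensus/drift term --- one Lipschitz-only and one smoothness-plus-heterogeneity --- whose minimum, after tuning $\eta$ exactly as you describe, gives the stated bound and recovers Lemma \ref{lemma:bd_grad_first_stoch} as the crude branch. The only detail left implicit in your sketch is that the smoothness-aided drift bound uses $\sum_m(x_t^m-\bar{x}_t)=0$ to replace $\nabla f_t^m(x_t^m)$ by $\nabla f_t^m(x_t^m)-\nabla f_t(\bar{x}_t)$ before applying Cauchy--Schwarz and Assumption \ref{ass:heterogeneity}, which is where the $\zeta$-dependent terms enter.
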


\begin{proof}[Proof of Theorem \ref{thm:smooth_first_stoch}]
Given Lemma \ref{lem:smooth_first_stoch}, it is now straightforward to prove Theorem \ref{thm:smooth_first_stoch} similar to the proof for Theorem \ref{thm:bd_grad_first_stoch} by replacing $\sigma^2$ with $G^2d$ ad choosing small enough $\delta$ such that $G\delta \cong$ the r.h.s. of the theorem statement.
\end{proof}


\section{Proof of Lemma \ref{lemma:bd_grad_first_stoch}}
In this section, we prove Lemma \ref{lemma:bd_grad_first_stoch}.
\begin{proof}[Proof of Lemma \ref{lemma:bd_grad_first_stoch}]
    Consider any time step $t\in[KR]$ and define ghost iterate $\bar{x}_t = \frac{1}{M}\sum_{m\in[M]}x_t^m$ (which not might actually get computed). If $K=1$, the machines calculate the stochastic gradient at the same point, $\bar{x}_t$. Then using the update rule of Algorithm \ref{alg:fed_ogd}, we can get the following:
    \begin{align*}
        \ee_t\sb{\norm{\bar{x}_{t+1}-x^\star}^2} &= \ee_t\sb{\norm{\bar{x}_t - \frac{\eta_t}{M}\sum_{m\in[M]}\nabla f_t^m(x_t^m) - x^\star + \frac{\eta_t}{M}\sum_{m=1}^{M}\rb{\nabla f_t^m(x_t^m) - g_t^m(x_t^m)}}^2}\\
        &= \norm{\bar{x}_{t}-x^\star}^2 + \frac{\eta_t^2}{M^2}\norm{\sum_{m\in[M]}\nabla f_t^m(x_t^m)}^2 - \frac{2\eta_t}{M}\sum_{m\in[M]}\inner{\bar{x}_t-x^\star}{\nabla f_t^m(x_t^m)} + \frac{\eta_t^2\sigma^2}{M}\\
        &= \norm{\bar{x}_{t}-x^\star}^2 + \frac{\eta_t^2}{M^2}\norm{\sum_{m\in[M]}\nabla f_t^m(x_t^m)}^2 - \frac{2\eta_t}{M}\sum_{m\in[M]}\inner{x^m_t-x^\star}{\nabla f_t^m(x_t^m)}\\
        &\quad + \mathbbm{1}_{K>1}\cdot\frac{2\eta_t}{M}\sum_{m\in[M]}\inner{x^m_t-\bar{x}_t}{\nabla f_t^m(x_t^m)} + \frac{\eta_t^2\sigma^2}{M}\nonumber\\
        &\leq \norm{\bar{x}_{t}-x^\star}^2 + \frac{\eta_t^2}{M^2}\norm{\sum_{m\in[M]}\nabla f_t^m(x_t^m)}^2 - \frac{2\eta_t}{M}\sum_{m\in[M]}\rb{f_t^m(x_t^m) - f_t^m(x^\star)} \\
        &\quad + \mathbbm{1}_{K>1}\cdot\frac{2\eta_t}{M}\sum_{m\in[M]}\inner{x^m_t-\bar{x}_t}{\nabla f_t^m(x_t^m)} + \frac{\eta_t^2\sigma^2}{M},\nonumber\\
    \end{align*}
    where $\ee_t$ is the expectation conditioned on the filtration at time $t$ under which $x_t^m$'s are measurable, and the last inequality is due to the convexity of each function. Re-arranging this leads to 
    \begin{align}
        \frac{1}{M}\sum_{m\in[M]}\rb{f_t^m(x_t^m) - f_t^m(x^\star)} &\leq \frac{1}{2\eta_t}\rb{\norm{\bar{x}_{t}-x^\star}^2 - \ee_t\sb{\norm{\bar{x}_{t+1}-x^\star}^2}} +  \frac{\eta_t}{2 M^2}\norm{\sum_{m\in[M]}\nabla f_t^m(x_t^m)}^2 \nonumber\\
        &\quad + \mathbbm{1}_{K>1}\cdot\frac{1}{M}\sum_{m\in[M]}\ee_t\inner{x^m_t-\bar{x}_t}{\nabla f_t^m(x_t^m)} + \frac{\eta_t\sigma^2}{2M}\nonumber\\
        &\leq \frac{1}{2\eta_t}\rb{\norm{\bar{x}_{t}-x^\star}^2 - \ee_t\sb{\norm{\bar{x}_{t+1}-x^\star}^2}} + \frac{\eta_t}{2}\rb{G^2 + \frac{\sigma^2}{M}}\nonumber\\
        &\quad + \mathbbm{1}_{K>1}\cdot\frac{G}{M}\sum_{m\in[M]}\ee\sb{\norm{x_t^m-\bar{x}_t}}\label{eq:inter1}.
    \end{align}
    The last inequality comes from each function's $G$-Lipschitzness.
    For the last term in \eqref{eq:inter1}, we can upper bound it similar to Lemma 8 in \citet{woodworth2020minibatch} to get that
    \begin{align}\label{eq:concensus}
        \frac{1}{M}\sum_{m\in[M]}\ee\sb{\norm{x_t^m-\bar{x}_t}} & \leq 2(\sigma+G)K\eta.
    \end{align}
     Plugging \eqref{eq:concensus} into \eqref{eq:inter1} and choosing a constant step-size $\eta$, and taking full expectation we get
    \begin{align*}
        \frac{1}{M}\sum_{m\in[M]}\ee\sb{f_t^m(x_t^m) - f_t^m(x^\star)}
        &\leq \frac{1}{2\eta}\rb{\norm{\ee\sb{\bar{x}_{t}-x^\star}^2} - \ee\sb{\norm{\bar{x}_{t+1}-x^\star}^2}} \\
        &\qquad+ \frac{\eta}{2}\rb{G^2 + \frac{\sigma^2}{M}} + \mathbbm{1}_{K>1}\cdot 2G(\sigma + G)K\eta.
    \end{align*}
    Summing this over time $t\in[KR]$ we get,
    \begin{align*}
        \frac{1}{M}\sum_{m\in[M], t\in[T]}\ee\sb{f_t^m(x_t^m) - f_t^m(x^\star)} &\preceq \frac{\norm{\bar{x}_{0}-x^\star}^2}{\eta} + \eta\rb{G^2 + \frac{\sigma^2}{M} + \mathbbm{1}_{K>1}\cdot\sigma G K + \mathbbm{1}_{K>1}\cdot\zeta G K}T\\
        &\preceq \frac{B^2}{\eta} + \eta\rb{G^2 + \frac{\sigma^2}{M} + \mathbbm{1}_{K>1}\cdot\sigma G K + \mathbbm{1}_{K>1}\cdot G^2K}T.
    \end{align*}
    Finally choosing, $$\eta = \frac{B}{G\sqrt{T}}\cdot\min\bigg\{1,\frac{G\sqrt{M}}{\sigma}, \frac{\sqrt{G}}{\mathbbm{1}_{K>1}\sqrt{\sigma K}}, \frac{1}{\mathbbm{1}_{K>1}\sqrt{K}}\bigg\},$$
    we can obtain,
    \begin{align}
        \frac{1}{M}\sum_{m\in[M], t\in[T]}\ee\sb{f_t^m(x_t^m) - f_t^m(x^\star)} &\preceq GB\sqrt{T} + \mathbbm{1}_{K>1}\cdot\sqrt{\sigma G}B\sqrt{KT} + \mathbbm{1}_{K>1}\cdot GB\sqrt{KT} + \frac{\sigma B\sqrt{T}}{\sqrt{M}}.
    \end{align}
    Dividing by $KR$ finishes the proof.
\end{proof}

\section{Proof of Lemma \ref{lem:smooth_first_stoch}}
In this section, we provide the proofs for Lemma \ref{lem:smooth_first_stoch}.
\begin{proof}[Proof of Lemma \ref{lem:smooth_first_stoch}]
    Consider any time step $t\in[KR]$ and define ghost iterate $\bar{x}_t = \frac{1}{M}\sum_{m\in[M]}x_t^m$ (which not might actually get computed). Then using the update rule of Algorithm \ref{alg:fed_ogd}, we can get:
    \begin{align*}
        \ee_t\sb{\norm{\bar{x}_{t+1}-x^\star}^2} &= \ee_t\sb{\norm{\bar{x}_t - \frac{\eta_t}{M}\sum_{m\in[M]}\nabla f_t^m(x_t^m) - x^\star + \frac{\eta_t}{M}\sum_{m=1}^{M}\rb{\nabla f_t^m(x_t^m) - g_t^m(x_t^m)}}^2},\\
        &= \norm{\bar{x}_{t}-x^\star}^2 + \frac{\eta_t^2}{M^2}\norm{\sum_{m\in[M]}\nabla f_t^m(x_t^m)}^2 - \frac{2\eta_t}{M}\sum_{m\in[M]}\inner{\bar{x}_t-x^\star}{\nabla f_t^m(x_t^m)} + \frac{\eta_t^2\sigma^2}{M}\\
        &= \norm{\bar{x}_{t}-x^\star}^2 + \frac{\eta_t^2}{M^2}\norm{\sum_{m\in[M]}\nabla f_t^m(x_t^m)}^2 - \frac{2\eta_t}{M}\sum_{m\in[M]}\inner{x^m_t-x^\star}{\nabla f_t^m(x_t^m)}\\
        &\quad + \mathbbm{1}_{K>1}\cdot\frac{2\eta_t}{M}\sum_{m\in[M]}\inner{x^m_t-\bar{x}_t}{\nabla f_t^m(x_t^m)} + \frac{\eta_t^2\sigma^2}{M}\nonumber\\
        &\leq \norm{\bar{x}_{t}-x^\star}^2 + \frac{\eta_t^2}{M^2}\norm{\sum_{m\in[M]}\nabla f_t^m(x_t^m)}^2 - \frac{2\eta_t}{M}\sum_{m\in[M]}\rb{f_t^m(x_t^m) - f_t^m(x^\star)} \\
        &\quad + \mathbbm{1}_{K>1}\cdot\frac{2\eta_t}{M}\sum_{m\in[M]}\inner{x^m_t-\bar{x}_t}{\nabla f_t^m(x_t^m)} + \frac{\eta_t^2\sigma^2}{M}\nonumber,
    \end{align*}
    where $\ee_t$ is the expectation taken with respect to the filtration at time $t$, and the last line comes from the convexity of each function. Re-arranging this and taking expectation gives 
    leads to
    \begin{align}
        \frac{1}{M}\sum_{m\in[M]}\ee\rb{f_t^m(x_t^m) - f_t^m(x^\star)} &\leq \frac{1}{2\eta_t}\rb{\ee\norm{\bar{x}_{t}-x^\star}^2 - \ee\sb{\norm{\bar{x}_{t+1}-x^\star}^2}} +  \textcolor{blue}{\frac{\eta_t}{2 M^2}\ee\norm{\sum_{m\in[M]}\nabla f_t^m(x_t^m)}^2}\nonumber\\
        &\quad + \mathbbm{1}_{K>1}\cdot\textcolor{red}{\frac{1}{M}\sum_{m\in[M]}\ee\inner{x^m_t-\bar{x}_t}{\nabla f_t^m(x_t^m)}} + \frac{\eta_t\sigma^2}{2M}\label{eq:smth_gen_recursion}
    \end{align}
    \paragraph{Bounding the blue term.} We consider two different ways to bound the term. First note that similar to Lemma \ref{lemma:bd_grad_first_stoch} we can just use the following bound,
    \begin{align}
        \textcolor{blue}{\frac{\eta_t}{2 M^2}\ee\norm{\sum_{m\in[M]}\nabla f_t^m(x_t^m)}^2} &\leq \frac{\eta_t G^2}{2} \label{eq:blue_bd_one}
    \end{align}
    However, since we also have smoothness, we can use the self-bounding property (c.f., Lemma 4.1 \cite{srebro2010optimistic}) to get,
    \begin{align}
        \textcolor{blue}{\frac{\eta_t}{2 M^2}\ee\norm{\sum_{m\in[M]}\nabla f_t^m(x_t^m)}^2} &\leq \frac{\eta_t H}{2M}\sum_{m\in[M]}\rb{f_t^m(x_t^m) - f_t^m(x_t^\star)} + \frac{\eta_t H}{2M} \sum_{m\in[M]}f_t^m(x_t^\star),\nonumber\\
        &\leq \frac{\eta_t H}{2M} \sum_{m\in[M]}f_t^m(x^\star),\label{eq:blue_bd_two}
    \end{align}
    where $x_t^\star$ is the optimizer of $\frac{1}{M}\sum_{m\in[M]}f_t^m(x)$.

    \paragraph{Bounding the red term.} We will bound the term in three different ways. Similar to lemma \ref{lemma:bd_grad_first_stoch}, we can bound the term after taking expectation and then bounding the consensus term similar to Lemma 8 in \citet{woodworth2020minibatch} as follows,
    \begin{align}\label{eq:red_bd_one}
        \textcolor{red}{\frac{1}{M}\sum_{m\in[M]}\ee\sb{\inner{x^m_t-\bar{x}_t}{\nabla f_t^m(x_t^m)}}} &\leq \frac{G}{M}\sum_{m\in[M]}\ee\sb{\norm{x_t^m-\bar{x}_t}}\nonumber\\
        &\leq 2G(\sigma + G)\sum_{t'=\tau(t)}^{\tau(t)+K - 1}\eta_{t'},
    \end{align}
    where $\tau(t)$ maps $t$ to the last time step when communication happens. Alternatively, we can use smoothness as follows after assuming $\eta_t\leq 1/2H$,
    \begin{align}\label{eq:red_bd_two}
        \textcolor{red}{\frac{1}{M}\sum_{m\in[M]}\ee\sb{\inner{x^m_t-\bar{x}_t}{\nabla f_t^m(x_t^m)}}} &= \frac{1}{M}\sum_{m\in[M]}\ee\sb{\inner{x^m_t-\bar{x}_t}{\nabla f_t^m(x_t^m) - \nabla f_t(\bar{x}_t)}},\nonumber\\
        &\leq \sqrt{\frac{1}{M}\sum_{m\in[M]}\ee\norm{x^m_t-\bar{x}_t}^2}\sqrt{\frac{1}{M}\sum_{m\in[M]}\ee\norm{\nabla f_t^m(x_t^m) - \nabla f_t(\bar{x}_t)}^2},\nonumber\\
        &\leq \sqrt{\frac{1}{M}\sum_{m\in[M]}\ee\norm{x^m_t-\bar{x}_t}^2}\sqrt{\frac{2}{M}\sum_{m\in[M]}H^2\ee\norm{x^m_t-\bar{x}_t}^2 + 2\zeta^2},\nonumber\\
        &\leq \frac{2H}{M}\sum_{m\in[M]}\ee\norm{x^m_t-\bar{x}_t}^2 + 2\zeta\sqrt{\frac{1}{M}\sum_{m\in[M]}\ee\norm{x^m_t-\bar{x}_t}^2},\nonumber\\
        &\leq 2\eta_t^2H(\sigma^2 K + \zeta^2K^2) + 2\eta_t\zeta(\sigma\sqrt{K} + \zeta K),
    \end{align}
    where we used lemma 8 from \citet{woodworth2020minibatch} in the last inequality. We can also use the lipschitzness and smoothness assumption together with a constant step size $\eta<1/2H$ to obtain,
    \begin{align}\label{eq:red_bd_three}
        \textcolor{red}{\frac{1}{M}\sum_{m\in[M]}\ee\sb{\inner{x^m_t-\bar{x}_t}{\nabla f_t^m(x_t^m)}}} &\leq \frac{G}{M}\sum_{m\in[M]}\ee\sb{\norm{x_t^m-\bar{x}_t}}\nonumber\\
        &\leq \eta G(\sigma\sqrt{K} + \zeta K).
    \end{align}
    \paragraph{Combining everything.} After using a constant step-size $\eta$, summing \eqref{eq:smth_gen_recursion} over time, we can use the upper bound of the red and blue terms in different ways. If we plug in \eqref{eq:blue_bd_one} and \eqref{eq:red_bd_one} we recover the guarantee of lemma \ref{lemma:bd_grad_first_stoch}. This is not surprising because $\fff^{G,H,B}\subseteq \fff^{G,B}$. Combining the upper bounds in all other combinations assuming $\eta<\frac{1}{2H}$, we can show the following upper bound
    \begin{align*}
        \frac{Reg(M,K,R)}{KR} &\preceq \frac{HB^2}{KR} + \frac{\sigma B}{\sqrt{MKR}} + \min\cb{\frac{GB}{\sqrt{KR}}, \frac{\sqrt{HF_\star}B}{\sqrt{KR}}},\\
        &\quad +\mathbbm{1}_{K>1}\min\cb{\frac{H^{1/3}B^{4/3}\sigma^{2/3}}{K^{1/3}R^{2/3}} + \frac{H^{1/3}B^{4/3}\zeta^{2/3}}{R^{2/3}} + \frac{\sqrt{\zeta\sigma}B}{K^{1/4}\sqrt{R}} + \frac{\zeta B}{\sqrt{R}}, \frac{\sqrt{G\sigma}B}{K^{1/4}\sqrt{R}} + \frac{\sqrt{G\zeta}B}{\sqrt{R}}},
    \end{align*}
    where we used step size,
    \begin{align*}
        \eta &= \min\Bigg\{\frac{1}{2H}, \frac{B\sqrt{M}}{\sigma \sqrt{KR}}, \max\cb{\frac{B}{G\sqrt{KR}}, \frac{B}{\sqrt{HF_\star KR}}},\\ 
        &\qquad\frac{1}{\mathbbm{1}_{K>1}}\cdot\max\Bigg\{\min\cb{\frac{B^{2/3}}{H^{1/3}\sigma^{2/3}K^{2/3}R^{1/3}}, \frac{B^{2/3}}{H^{1/3}\zeta^{2/3}KR^{1/3}}, \frac{B}{K^{3/4}\sqrt{\zeta \sigma R}}, \frac{B}{\zeta K\sqrt{R}}},\\ 
        &\qquad\min\cb{\frac{B}{K^{3/4}\sqrt{G \sigma R}}, \frac{B}{K\sqrt{\zeta GR}}}\Bigg\}\Bigg\}
    \end{align*}
    This finishes the proof.
\end{proof}

\subsection{Modifying the Proof for Federated Adversarial Linear Bandits}\label{sec:modify}
To prove the guarantee for the adversarial linear bandits, we first note that the self-bounding property can't be used anymore as the functions are not non-negative. Thus we proceed with the lemma's proof with the following changes:
\begin{itemize}
    \item We don't prove the additional upper bound in \eqref{eq:blue_bd_two} for blue term.
    \item While upper bounding the red term in \eqref{eq:red_bd_two}, we set $H=0$ and use this single bound for the red term.
\end{itemize}
After making these changes, combining all the terms, and tuning the learning rate, we recover the correct lemma for federated adversarial linear bandits.

\end{document}